\documentclass{article}

\usepackage{PRIMEarxiv}

\usepackage[utf8]{inputenc} 
\usepackage[T1]{fontenc}    
\usepackage{url}            
\usepackage{booktabs}       
\usepackage{array}          
\usepackage{amsfonts}       
\usepackage{nicefrac}       
\usepackage{microtype}      
\usepackage{fancyhdr}       
\usepackage{graphicx}       
\graphicspath{{media/}}     

\usepackage{amsmath} 
\usepackage{amsthm}
\usepackage{amssymb}
\usepackage{longtable}
\usepackage{mathtools}
\usepackage{pdflscape}
\newcolumntype{L}[1]{>{\raggedright\arraybackslash}p{#1}}
\newtheorem{theorem}{Theorem}[section]
\newtheorem{corollary}{Corollary}[theorem]

\newtheorem{proposition}[theorem]{Proposition}
\newtheorem{definition}{Definition}

\newtheorem{example}{Example}

\usepackage{algorithm}
\usepackage{algpseudocode}
\usepackage{natbib}
\usepackage{hyperref}       

\title{A Fiber Criterion for Representation Identifiability in Supervised Learning}

\author{
  Vasileios Sevetlidis\\
  Athena Research Center, Kimmeria Campus, Xanthi, Greece \\
  Democritus University of Thrace, Vas. Sofias Campus, Xanthi, Greece \\
  International Hellenic University, Serres, Greece\\
  \texttt{vasiseve@athenarc.gr} \\
}

\begin{document}
\maketitle

\begin{abstract}
Supervised learning evaluates predictors through their input-output behavior. When a predictor is implemented as a composition $f=c\circ h$, supervised evidence constrains the composite map $f$ but need not determine the representation-head factorization $(h,c)$. This paper formalizes the resulting representation-level identifiability problem: for a class of admissible representation-head pairs, a representation property is identifiable from the induced predictor exactly when it is constant on the fibers of the projection $(h,c)\mapsto c\circ h$, equivalently when it descends to a well-defined property of the predictor. Predictor-preserving augmentation gives a canonical obstruction: auxiliary information can be appended to a representation while the head ignores it, leaving the predictor unchanged but altering properties such as minimality, compression, invariance, equivariance, nuisance information, or semantic accessibility. This construction separates representation identifiability from optimization and finite-sample estimation. Finite-sample diagnostics illustrate, rather than prove, the criterion: exact algebraic witnesses hold the predictor fixed while changing representation diagnostics, and matched-performance Waterbirds models show that different constraints can select different representations at similar supervised performance. The results clarify that representation-level claims require assumptions, objectives, measurements, or inductive biases beyond supervised predictive behavior alone.
\end{abstract}

\keywords{representation learning, identifiability, supervised learning, invariance, nuisance information, representation probing}

\section{Introduction}

Supervised learning evaluates predictors through their input--output behavior. 
A model is judged by whether its predictions match the target labels, or by how
small its supervised loss is. Modern supervised models, however, are often
interpreted not only as predictors but also as representation learners
\citep{bengio2013representation}. A predictor is commonly implemented as a
composition $f=c\circ h$, where $h$ maps inputs to an internal representation
and $c$ maps that representation to an output. The representation is then
described as compressed, invariant, nuisance-free, semantically meaningful,
disentangled, or aligned with task-relevant factors
\citep{tishby2015deep, bengio2013representation, higgins2017beta,
kim2018disentangling, locatello2019challenging, scholkopf2021toward}.
These are claims about the internal representation, not only about the final
input--output map.

This paper studies which such claims are determined by supervised predictive
behavior. The basic issue is that supervised evidence constrains the composite
predictor $c\circ h$, but need not determine the representation--head
factorization $(h,c)$ that realizes it. Two admissible factorizations may
induce exactly the same predictor while having substantially different
representations. One representation may discard nuisance information while
another retains it; one may be compressed while another is not; one may be
invariant to a transformation while another encodes transformation-sensitive
information ignored by the head. These differences are invisible at the level
of the induced supervised predictor. This is closely related in spirit to
known non-identifiability phenomena in representation learning, where additional
assumptions or inductive biases are needed to recover particular latent or
semantic structure from observational evidence alone
\citep{locatello2019challenging, scholkopf2021toward}.

This gap is formulated as an identifiability problem. Given a class of
admissible representation--head pairs, the projection
\[
\Pi(h,c)=[c\circ h]_{P_X}
\]
maps each factorization to its induced predictor, identified up to
$P_X$-almost-sure equality. All factorizations in the same fiber of $\Pi$
are observationally equivalent from the supervised predictive point of view.
A representation-level property is therefore identifiable from the induced
predictor exactly when it is constant on these fibers. Equivalently, the
property must descend to a well-defined property of the predictor itself.
If a property varies among admissible factorizations of the same predictor,
then supervised predictive behavior alone cannot determine it.

A simple construction gives a canonical obstruction. Starting from a
factorization $(h,c)$, append auxiliary information $q(x)$ to the
representation and extend the head so that it ignores the added coordinate:
\[
h^q(x)=(h(x),q(x)),
\qquad
c^q(u,v)=c(u).
\]
Then $c^q\circ h^q=c\circ h$ pointwise, so the induced predictor and
supervised risk are unchanged. Nevertheless, the representation may now contain
additional nuisance information, fail to be minimal, lose compression, violate
an invariance property, or make a semantic attribute accessible. Thus the
predictor is fixed while representation-level properties can change. This
obstruction also echoes empirical concerns that high predictive performance
need not imply reliance on robust, semantically intended, or causally relevant
features \citep{zhang2017understanding, geirhos2020shortcut}.

The claim is identifiability-theoretic: supervised predictive behavior
certifies a representation-level property only when that property is determined
by the induced predictor. Invariance, compression, and semantic structure may
arise through architectures, regularization, objectives, augmentation,
optimization bias, or measurement, but they are not consequences of supervised
prediction alone. When representation-level properties are obtained, their
justification must come from additional structure, such as architectural
restrictions, regularization, bottlenecks, data augmentation, auxiliary losses,
multiple environments, causal assumptions, optimization bias, or direct
representation-level measurement
\citep{tishby2015deep, arjovsky2019invariant, locatello2019challenging,
geirhos2020shortcut, scholkopf2021toward}.

The contributions are as follows. \textbf{First}, the paper formulates
representation-level identifiability for supervised predictors through the
projection
\[
\Pi(h,c)=[c\circ h]_{P_X}.
\]
\textbf{Second}, it proves a fiber/descent criterion: a representation property
is identifiable from the induced predictor if and only if it is constant on
every fiber of $\Pi$, equivalently if it descends to a well-defined property of
the predictor. \textbf{Third}, it gives a general predictor-preserving
augmentation obstruction and applies it to common representation-level
desiderata, including minimality, compression, invariance, equivariance,
nuisance information, and semantic accessibility. The empirical component
provides finite-sample diagnostics of fiber variation; the theorem itself is
exact. Exact algebraic witnesses demonstrate finite-sample variation of common
representation diagnostics inside a fixed predictor fiber, while a
matched-performance Waterbirds study illustrates how different
representation-level constraints can select different diagnostics at similar
supervised performance.

The rest of the paper follows the progression from formal obstruction to
empirical diagnostics. Section~\ref{sec:theory} introduces the supervised
factorization setup, defines predictor equivalence and representation-level
identifiability, proves the fiber/descent criterion, and derives the
predictor-preserving augmentation obstruction. Section~\ref{sec:examples}
applies the criterion to common representation-level properties.
Section~\ref{sec:experiments} presents the empirical evidence, first through
exact predictor-preserving diagnostic witnesses in
Section~\ref{subsec:experiments-exact-witnesses}, and then through the
Waterbirds matched-performance study in
Section~\ref{subsec:experiments-constraints}. Section~\ref{sec:discussion-limitations}
discusses scope, interpretation, and limitations, concluding with 
Section~\ref{sec:conclusion}. Appendix~\ref{app:conceptual-figures}
provides additional conceptual illustrations of the fiber obstruction.
Appendix~\ref{app:experiments} gives implementation details, controls, and
additional results for the exact predictor-preserving witnesses.
Appendix~\ref{app:additional-experimental-details} reports additional
Waterbirds details, predictor-preserving augmentation details, and near-fiber
diagnostics.

\section{Background and Related Work}
\label{sec:related-work}

Modern supervised learning evaluates models through their input--output
behavior, but learned models are often interpreted through their internal
representations. A predictor is commonly implemented as a factorization
$f=c\circ h$, where $h$ maps inputs to an internal representation and $c$
maps representations to predictions. Representation learning is motivated by
the idea that useful internal variables can expose factors of variation that
support prediction, transfer, abstraction, and generalization
\citep{bengio2013representation}. In this view, the representation is not only
a computational intermediate. It is also treated as an object that may be
compressed, invariant, equivariant, disentangled, semantically meaningful,
nuisance-free, or causally aligned
\citep{tishby1999information,tishby2015deep,higgins2017beta,
locatello2019challenging,scholkopf2021toward}.

These representation-level claims are not ordinary claims about predictive
accuracy. Supervised loss evaluates the composite map $c\circ h$, whereas
properties such as compression, invariance, nuisance removal, semantic
accessibility, or causal alignment concern the particular internal variable
$h(X)$ through which the predictor is realized. Thus two questions must be
distinguished. One may ask whether a model predicts well. One may also ask
whether the representation used by that model has a particular structural
property. The first question concerns the induced predictor; the second
concerns a factorization of that predictor.

This distinction is related to the broader phenomenon of underspecification in
modern machine learning. An underspecified learning pipeline may return many
predictors with comparable in-distribution performance, while those predictors
can behave differently under deployment shifts
\citep{damour2022underspecification}. Shortcut-learning and robustness studies
likewise show that high predictive performance does not by itself imply
reliance on intended, stable, or semantically robust features
\citep{geirhos2020shortcut,sagawa2020distributionally}. Most such work focuses
on ambiguity among predictors, training procedures, or deployment behavior. The
question studied here is sharper and more internal: even after the composite
supervised predictor $c\circ h$ is fixed, which properties of the
representation component $h$ are determined? The obstruction addressed in this
paper is therefore not only statistical, finite-sample, or
distribution-shift-based. It is an algebraic ambiguity of the
representation--head factorization itself.

Several lines of work add objectives or architectural restrictions precisely in
order to shape this internal factorization. The information bottleneck
principle seeks representations that preserve information about the target
while compressing information about the input
\citep{tishby1999information,tishby2015deep}. Variational and neural versions
make this trade-off trainable in modern models \citep{alemi2017deep}, and
related information-theoretic analyses connect compression, invariance, and
generalization \citep{achille2018emergence,achille2018information}. From the
perspective of the present paper, bottleneck methods are important because
they supply additional structure that is not contained in the supervised
predictor alone. They restrict or penalize the admissible representations,
thereby selecting factorizations with desired compression or minimality
properties.

A related family of methods encourages representations to remove nuisance
information or remain stable across domains, environments, or transformations.
Domain-adversarial training promotes task-predictive but domain-uninformative
representations \citep{ganin2016domain}. Invariant risk minimization and
related objectives use multi-environment structure to encourage stable
predictive mechanisms
\citep{arjovsky2019invariant,ahuja2020invariant,krueger2021out,
li2022invariant}, while subsequent work has analyzed limitations of practical
invariance objectives \citep{kamath2021does}. These methods are motivated by
the fact that empirical risk minimization may exploit environment-specific or
spurious correlations \citep{geirhos2020shortcut,sagawa2020distributionally}.
In the language of this paper, multiple environments, adversarial losses, and
invariance penalties enrich the learning problem beyond the single supervised
predictor. They can remove or penalize factorizations that retain nuisance
information, but this information is not excluded merely by the fact that the
final predictor has low supervised loss.

Architectural approaches impose structure more directly. Group-equivariant
convolutional networks and geometric deep learning build transformation
structure into the hypothesis class itself
\citep{cohen2016group,bronstein2021geometric}. Augmentation-based and
self-supervised methods impose relationships between transformed views
\citep{chen2020simple,he2020momentum,grill2020bootstrap}. Theoretical analyses
of self-supervised learning also formalize idealized representations through
augmentation-induced equivalence relations and characterize when such
representations support downstream tasks invariant to those transformations
\citep{dubois2022improving}. These works show how representation properties
can be made stable by design, by restricting the class of admissible maps or
by adding pretext constraints. The present paper takes a complementary
viewpoint: it asks what can be inferred before such restrictions are imposed.
When equivariance, invariance, or transformation stability holds, the
justification comes from architectural, augmentation, or objective-level
structure, not from supervised predictor equivalence alone.

The disentanglement and causal representation literatures make the role of
additional assumptions especially explicit. Methods such as $\beta$-VAE and
FactorVAE encourage factorized latent coordinates
\citep{higgins2017beta,kim2018factorvae}. Negative results show that
unsupervised disentanglement is not identifiable without inductive bias
\citep{locatello2019challenging}, while positive identifiability results
typically rely on auxiliary variables, temporal structure, labels, conditional
priors, or restrictions on the generative process
\citep{hyvarinen2019nonlinear,khemakhem2020variational}. Causal representation
learning likewise emphasizes that recovering causally meaningful variables
requires assumptions beyond observational predictive success
\citep{scholkopf2021toward}. These works usually study the recovery of latent
generative factors or causal variables. The present paper studies a different
object: not whether a true latent representation is recovered, but whether a
property of the learned representation component $h$ is determined by the
supervised predictor $c\circ h$. The common lesson is nevertheless the same:
internal structure does not follow from observational or predictive success
without additional assumptions.

A more directly related line of work studies identifiability of learned
representations themselves. \citet{roeder2021linear} analyze conditions under
which representation functions learned for downstream tasks are identifiable
up to a linear transformation. Recent work also distinguishes statistical
identifiability, concerning consistency of representations across runs or
solutions, from structural identifiability, concerning alignment with
unobserved ground-truth factors \citep{nelson2026statistical}. These
approaches ask whether learned representations are recoverable, comparable
across solutions, or determined up to a specified transformation class. The
question studied here is different. The observable is taken to be the induced
supervised predictor $c\circ h$. The object of interest is a
representation-level property, such as minimality, compression, invariance,
nuisance information, or semantic accessibility. The paper asks whether such a
property is well defined on the equivalence class of all admissible
factorizations that induce the same predictor.

This gives a different role to equivalence classes. In invariant and
self-supervised representation learning, equivalence classes often group inputs
or augmented views that should share a representation
\citep{chen2020simple,dubois2022improving}. In geometric deep learning,
symmetry structure is imposed through group actions, equivariant maps, and
geometric constraints on the hypothesis class
\citep{cohen2016group,bronstein2021geometric}. Here, the relevant equivalence
classes are not sets of equivalent inputs and not geometric fibers inside a
fixed architecture. They are fibers in the space of representation--head
factorizations over a fixed supervised predictor. Two pairs $(h,c)$ and
$(\tilde h,\tilde c)$ lie in the same fiber when
\[
c\circ h = \tilde c\circ \tilde h
\quad P_X\text{-a.s.}
\]
The central issue is whether a property of $h$ is constant over all such
factorizations. If it is not, then the property cannot be inferred from the
induced predictor.

Empirical studies of neural representations provide a complementary
perspective. Representation-similarity methods such as SVCCA and CKA compare
internal representations across layers, random seeds, tasks, and architectures
\citep{raghu2017svcca,kornblith2019similarity}. Recent surveys distinguish
representational similarity, which compares internal activations, from
functional similarity, which compares input--output behavior
\citep{klabunde2025similarity}. Pathwise and geometric diagnostics provide a
further perspective, measuring aspects of representation geometry that may not
be captured by pointwise similarity alone
\citep{sevetlidis2026gaugeinvariant}. These diagnostics compare or measure
chosen representations. The present paper instead asks a prior identifiability
question: whether the measured representation property is determined by the
predictor at all. Two systems can be functionally equivalent as predictors
while differing as representation--head factorizations.

Probing methods make this distinction especially concrete. A probe measures
what information is accessible from a learned representation, for example
through a linear classifier trained on frozen features
\citep{alain2017understanding}. Later work emphasizes that probe conclusions
depend on probe capacity, controls, selectivity, and the distinction between
information being present in a representation and information being used by
the original predictor
\citep{hewitt2019designing,belinkov2022probing}. Conditional probing and
usable-information approaches sharpen this concern by measuring information
available beyond a baseline representation
\citep{hewitt2021conditional}. The obstruction studied here gives an exact
algebraic form of the same dilemma. A statistic can be appended to a
representation and made easily decodable by a probe while the original head
ignores it completely. Thus probe accessibility is a property of the chosen
representation, not automatically a property of the supervised predictor.

Prior work shows that representation-level structure is
usually obtained or justified by adding something beyond plain supervised
prediction: bottlenecks, regularizers, adversarial objectives, augmentation
schemes, equivariant architectures, auxiliary variables, multiple
environments, causal assumptions, optimization bias, or direct
representation-level measurements
\citep{tishby1999information,alemi2017deep,ganin2016domain,
arjovsky2019invariant,cohen2016group,locatello2019challenging,
scholkopf2021toward,hewitt2021conditional}. The baseline identifiability question for supervised factorized predictors: \textit{"which properties of an internal representation are uniquely determined by the supervised predictor itself?"}, however, precedes these interventions. The contribution of this paper is
to formalize the baseline obstruction that makes such additions necessary. For
a class of admissible representation--head pairs, consider the projection
\[
\Pi(h,c)=[c\circ h]_{P_X}.
\]
A representation property is identifiable from the induced predictor exactly
when it descends through this projection, equivalently when it is constant on
the fibers of $\Pi$. Predictor-preserving augmentation gives a canonical
witness of failure: one may replace $h(x)$ by $(h(x),q(x))$ and extend the head
so that it ignores $q(x)$, leaving $c\circ h$ unchanged while altering
compression, invariance, nuisance information, equivariance, semantic
accessibility, or probe behavior. The resulting criterion provides a reference
point for interpreting representation-level claims. When a property does not
descend to the induced predictor, evidence for that property must enter
through additional structure.

\section{Theory}
\label{sec:theory}

Supervised prediction constrains the induced input--output map, but not in
general the representation--head factorization that realizes it. This distinction
is consistent with the general identifiability viewpoint that observations
determine equivalence classes of latent or structural objects rather than
necessarily determining those objects uniquely
\citep{paulino1994identifiability, huang2016modelidentifiability}. This section
formalizes that distinction through predictor equivalence and representation
identifiability. The central result is a fiber/descent criterion: a
representation property is identifiable from the induced predictor exactly when
it is constant over all admissible factorizations inducing that predictor,
equivalently when it descends to a well-defined property of the composite
predictor. The criterion yields an augmentation obstruction: any representation
property that can be changed by appending unused admissible information to the
representation is not identifiable from the induced predictor, and therefore is
not identifiable from scalar supervised risk alone.

The hierarchy underlying the section is shown in Figure~\ref{fig:identifiability-hierarchy}. A representation--head factorization $(h,c)$ determines the induced predictor $f=c\circ h$, and the induced predictor determines the scalar risk $R_P(f)$. Each arrow forgets information. Consequently, representation-level claims require assumptions beyond predictor-level claims, and  predictor-level claims require assumptions beyond scalar risk comparisons. This mirrors the role of inductive biases in representation learning: semantic, disentangled, invariant, or causal structure is generally not recovered from predictive or observational evidence without additional assumptions \citep{bengio2013representation, locatello2019challenging, scholkopf2021toward}.

\begin{figure}[t]
\centering
\includegraphics[width=0.86\linewidth]{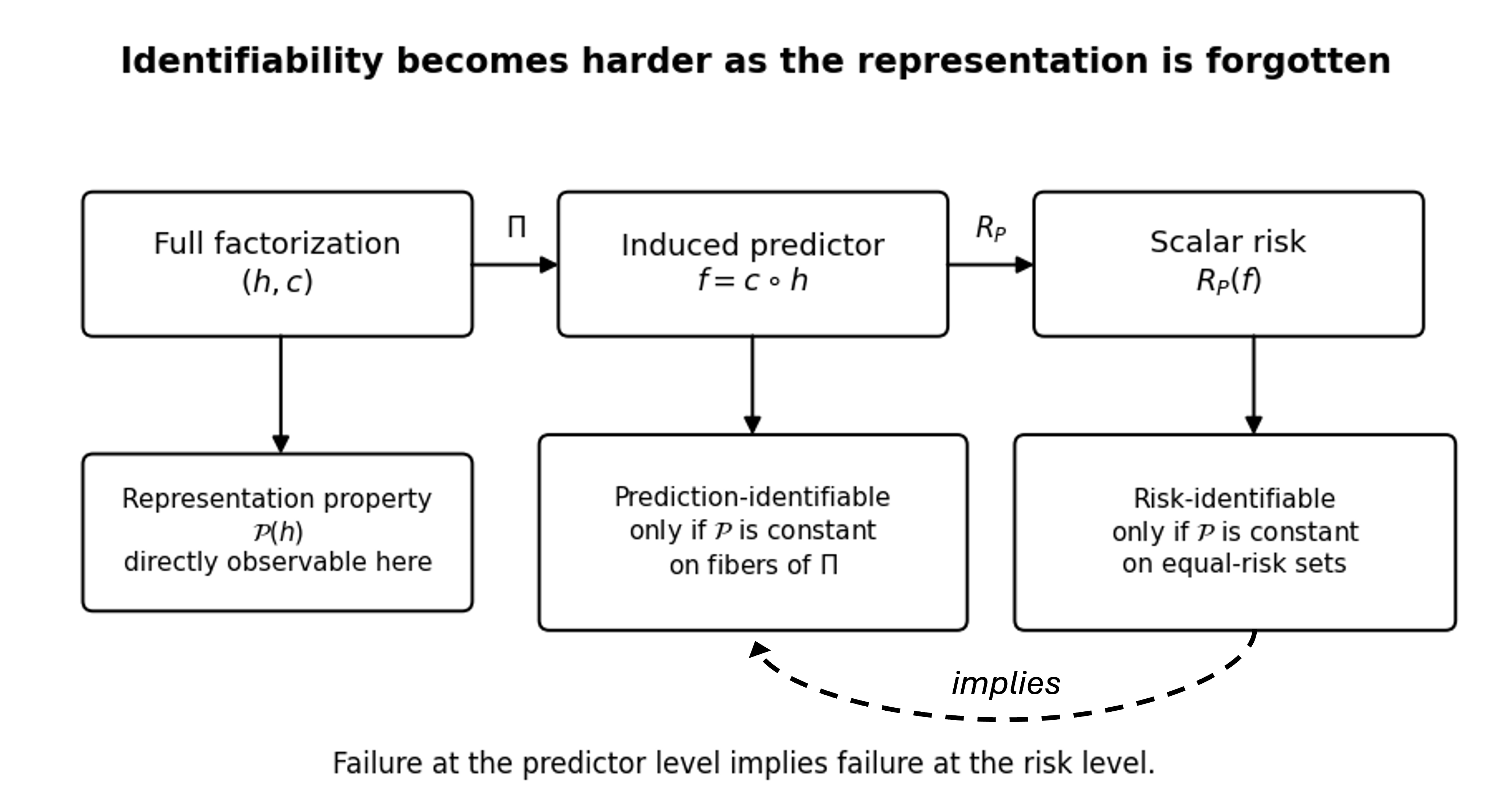}
\caption{Information loss from factorization to prediction to scalar risk. The full representation--head pair $(h,c)$ determines the induced predictor $f=c\circ h$, which in turn determines the scalar risk $R_P(f)$. Representation-level properties are properties of the factorization. Failure of identifiability at the predictor level therefore implies failure of identifiability from scalar supervised risk.}
\label{fig:identifiability-hierarchy}
\end{figure}

\subsection{Supervised risk and factorized predictors}
\label{subsec:supervised-risk}

Let $(\mathcal X,\mathcal A_{\mathcal X})$ be a measurable input space and let $(\mathcal Y,\mathcal A_{\mathcal Y})$ be a measurable output space. Let $P$ be a probability measure on $\mathcal X\times\mathcal Y$, and let $(X,Y)\sim P$. Denote by $P_X$ the marginal distribution of $X$.

Let $(\mathcal A,\mathcal A_{\mathcal A})$ be a measurable action space. A predictor is a measurable map
\begin{equation}
f:\mathcal X\to\mathcal A.
\end{equation}
Given a measurable loss function $\ell:\mathcal A\times\mathcal Y\to[0,\infty]$, the population risk of $f$ is
\begin{equation}
R_P(f)
=
\mathbb E_P\big[\ell(f(X),Y)\big],
\end{equation}
where the expectation is understood as an extended nonnegative expectation. This is the standard statistical decision-theoretic formulation in which actions are evaluated by expected loss or risk \citep{berger1985statistical,miescke2008statistical}. Throughout this section, identifiability from supervised risk refers only to
identifiability from this scalar population-risk value, not from the full
predictor, conditional risks, loss distributions, empirical optimization
dynamics, or algorithm-specific selection effects. This convention separates
statistical decision-theoretic risk evaluation from algorithmic or
optimization-induced model selection \citep{berger1985statistical,
hardt2016train, neyshabur2017exploring}.

A representation--head pair consists of a measurable representation map
\begin{equation}
h:\mathcal X\to\mathcal H
\end{equation}
and a measurable prediction head
\begin{equation}
c:\mathcal H\to\mathcal A,
\end{equation}
where $(\mathcal H,\mathcal A_{\mathcal H})$ is a measurable representation space. The pair $(h,c)$ induces the composite predictor
\begin{equation}
f_{h,c}
=
c\circ h.
\end{equation}
The supervised risk of the pair is defined by
\begin{equation}
R_P(h,c)
:=
R_P(c\circ h)
=
\mathbb E_P\big[\ell(c(h(X)),Y)\big].
\end{equation}

The following elementary fact is the starting point of the obstruction. It states that the supervised objective is a functional of the composite predictor, not of the particular factorization used to realize it.

\begin{proposition}[Risk is a functional of the composite predictor]
\label{prop:risk-composite}
Let $h:\mathcal X\to\mathcal H$ and $\tilde h:\mathcal X\to\widetilde{\mathcal H}$ be measurable representations, and let $c:\mathcal H\to\mathcal A$ and $\tilde c:\widetilde{\mathcal H}\to\mathcal A$ be measurable heads. If
\begin{equation}
c(h(X))=\tilde c(\tilde h(X))
\qquad
P_X\text{-a.s.},
\end{equation}
then
\begin{equation}
R_P(c\circ h)=R_P(\tilde c\circ \tilde h).
\end{equation}
\end{proposition}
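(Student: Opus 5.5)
The plan is to show that the integrands $\ell(c(h(X)),Y)$ and $\ell(\tilde c(\tilde h(X)),Y)$ agree $P$-almost surely on $\mathcal X\times\mathcal Y$. Then I will invoke the fact that extended nonnegative expectations do not see null sets.

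First, set $f=c\circ h$ and $\tilde f=\tilde c\circ\tilde h$. Both are measurable maps $\mathcal X\to\mathcal A$, being compositions of measurable maps. Next, let
\[
N=\{x\in\mathcal X: f(x)\neq \tilde f(x)\}.
\]
The hypothesis says $N$ is contained in a $P_X$-null set $N_0\in\mathcal A_{\mathcal X}$; I read ``$P_X$-a.s.'' in this sense, so measurability of $N$ itself is never needed.

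Then I lift the null set to the product space. The set $N_0\times\mathcal Y$ is measurable and satisfies
\[
P(N_0\times\mathcal Y)=P_X(N_0)=0,
\]
by the definition of the marginal. Off this set we have $f(x)=\tilde f(x)$, and hence $\ell(f(x),y)=\ell(\tilde f(x),y)$.

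Both functions $(x,y)\mapsto\ell(f(x),y)$ and $(x,y)\mapsto\ell(\tilde f(x),y)$ are measurable and $[0,\infty]$-valued. This holds because $(x,y)\mapsto(f(x),y)$ is measurable into $\mathcal A\times\mathcal Y$ with the product $\sigma$-algebra, and $\ell$ is measurable on that product. So the two integrands are nonnegative measurable functions that agree $P$-a.s.

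The standard property of the Lebesgue integral of nonnegative extended-valued functions then gives equal integrals, including the case where both equal $+\infty$. This yields
\[
R_P(c\circ h)=R_P(\tilde c\circ\tilde h).
\]

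There is no real obstacle. The only point needing care is the measurability bookkeeping: the exceptional set should be taken as a measurable null superset, and joint measurability of $(x,y)\mapsto\ell(f(x),y)$ should be checked. Both are routine.
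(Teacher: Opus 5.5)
Your proof is correct and follows the same route as the paper: lift the $P_X$-a.s. equality of the composite predictors to $P$-a.s. equality of the loss integrands, then use that extended nonnegative expectations ignore null sets. You also spell out the measurability details the paper leaves implicit, namely the null set $N_0\times\mathcal Y$ and the joint measurability of $(x,y)\mapsto\ell(f(x),y)$.
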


\begin{proof}
The assumption implies
\begin{equation}
c(h(X))=\tilde c(\tilde h(X))
\qquad
P\text{-a.s.}
\end{equation}
Therefore
\begin{equation}
\ell(c(h(X)),Y)
=
\ell(\tilde c(\tilde h(X)),Y)
\qquad
P\text{-a.s.}
\end{equation}
Taking extended nonnegative expectations gives the claim.
\end{proof}

Proposition~\ref{prop:risk-composite} separates two levels of non-identifiability. Scalar risk generally does not identify the predictor, since distinct predictors may have the same risk. Moreover, even before this coarser ambiguity appears, distinct factorizations of the same predictor are already indistinguishable to supervised risk.

\begin{example}[Same composite predictor, same risk]
\label{ex:same-composite-same-risk}
Let $X=(U,V)$, and suppose the prediction target depends on $U$ alone. Consider two factorizations:
\begin{equation}
h_1(X)=U,
\qquad
c_1(u)=a(u),
\end{equation}
and
\begin{equation}
h_2(X)=(U,V),
\qquad
c_2(u,v)=a(u).
\end{equation}
Then
\begin{equation}
c_1(h_1(X))=a(U)=c_2(h_2(X))
\qquad
P_X\text{-a.s.}
\end{equation}
Therefore, for any supervised loss $\ell$,
\begin{equation}
R_P(c_1\circ h_1)=R_P(c_2\circ h_2).
\end{equation}
The two representations may differ substantially: $h_1$ discards $V$, whereas $h_2$ retains it. Nevertheless, supervised risk cannot distinguish the two factorizations because their final predictions are identical.
\end{example}

\subsection{Predictor equivalence}
\label{subsec:predictor-equivalence}

Let $\mathcal M$ be a class of admissible representation--head pairs. Each element of $\mathcal M$ is a pair $(h,c)$, where $h:\mathcal X\to\mathcal H_h$ is a measurable representation into some measurable space $\mathcal H_h$, and $c:\mathcal H_h\to\mathcal A$ is a measurable prediction head. The representation space may depend on the pair.

Since supervised prediction under $P_X$ is insensitive to changes on
$P_X$-null sets, predictors are regarded modulo $P_X$-almost-sure equality.
Let $[f]_{P_X}$ denote the equivalence class of a measurable predictor
$f:\mathcal X\to\mathcal A$ under $P_X$-almost-sure equality. The induced
predictor class is
\begin{equation}
\Pi(\mathcal M)
=
\{[c\circ h]_{P_X}:(h,c)\in\mathcal M\}.
\end{equation}
The projection from a factorization to its induced predictor is
\begin{equation}
\Pi:\mathcal M\to\Pi(\mathcal M),
\qquad
\Pi(h,c)=[c\circ h]_{P_X}.
\end{equation}

Figure~\ref{fig:predictor-equivalence-fibers} represents the fibers of this projection, namely the sets of admissible factorizations that induce the same predictor. The induced predictor is the image point $[c\circ h]_{P_X}$, not the particular point $(h,c)$ inside the fiber. Thus, representation non-identifiability is naturally formulated as variation within fibers of $\Pi$.

\begin{figure}[t]
\centering
\includegraphics[width=0.92\linewidth]{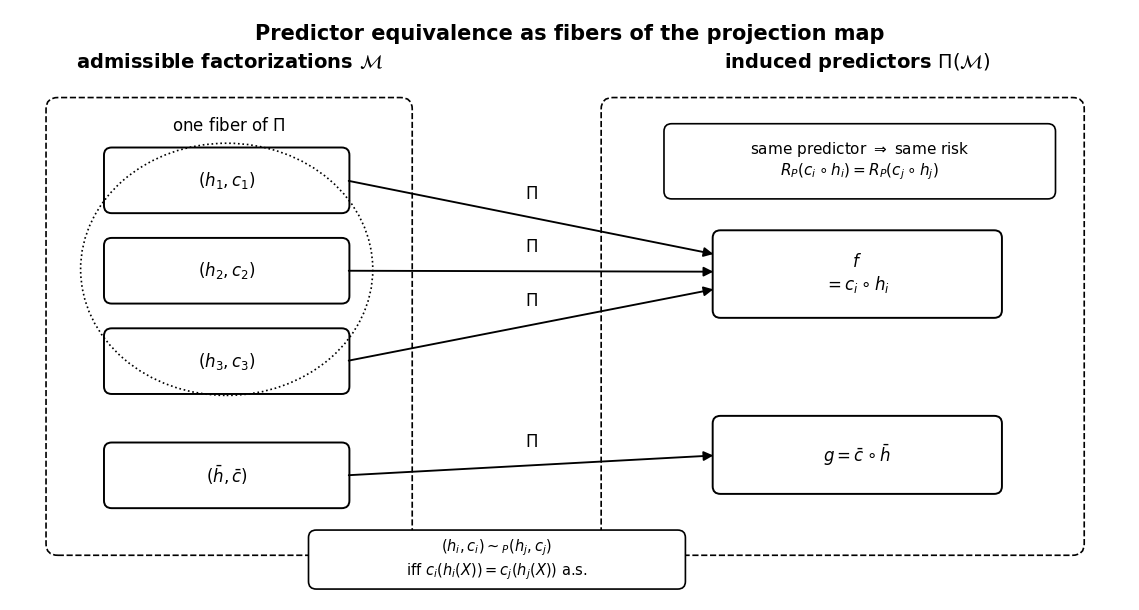}
\caption{Predictor equivalence as fibers of the projection map. The projection $\Pi$ maps a representation--head factorization $(h,c)$ to its induced predictor $[c\circ h]_{P_X}$. Distinct factorizations may induce the same predictor and therefore lie in the same fiber of $\Pi$. Supervised risk is constant on each such fiber, so representation-level properties are identifiable from the induced predictor exactly when they are invariant within these fibers.}
\label{fig:predictor-equivalence-fibers}
\end{figure}

\begin{definition}[Predictor equivalence]
\label{def:predictor-equivalence}
Two admissible pairs $(h,c),(\tilde h,\tilde c)\in\mathcal M$ are predictor-equivalent under $P_X$, written
\begin{equation}
(h,c)\sim_P(\tilde h,\tilde c),
\end{equation}
if
\begin{equation}
c(h(X))=\tilde c(\tilde h(X))
\qquad
P_X\text{-a.s.}
\end{equation}
Equivalently, $(h,c)\sim_P(\tilde h,\tilde c)$ if and only if $\Pi(h,c)=\Pi(\tilde h,\tilde c)$.
\end{definition}

Predictor-equivalent pairs induce the same supervised input--output behavior. The equivalence classes of $\sim_P$ are precisely the fibers of the projection map $\Pi$. This is the same abstract structure that appears in identification problems, where the observation map partitions candidate objects into observationally equivalent classes \citep{paulino1994identifiability,basse2020general}.

\begin{corollary}[Predictor equivalence implies risk equivalence]
\label{cor:prediction-implies-risk}
If $(h,c)\sim_P(\tilde h,\tilde c)$, then
\begin{equation}
R_P(c\circ h)=R_P(\tilde c\circ\tilde h).
\end{equation}
\end{corollary}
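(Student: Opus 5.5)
This follows directly by unwinding Definition~\ref{def:predictor-equivalence} and applying Proposition~\ref{prop:risk-composite}. Assume $(h,c)\sim_P(\tilde h,\tilde c)$. By the definition of predictor equivalence,
\begin{equation*}
c(h(X))=\tilde c(\tilde h(X))\qquad P_X\text{-a.s.}
\end{equation*}
This is exactly the hypothesis of Proposition~\ref{prop:risk-composite}, applied to the representations $h:\mathcal X\to\mathcal H_h$ and $\tilde h:\mathcal X\to\mathcal H_{\tilde h}$ with heads $c$ and $\tilde c$. Every element of $\mathcal M$ is a measurable pair, so the measurability requirements of that proposition hold, and it gives $R_P(c\circ h)=R_P(\tilde c\circ\tilde h)$.

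Alternatively, one can phrase the argument through the equivalent characterization $\Pi(h,c)=\Pi(\tilde h,\tilde c)$. The point is that $R_P$ depends on $f$ only through $[f]_{P_X}$, so it descends to a well-defined function on $\Pi(\mathcal M)$. Any two representatives of the same class agree $P_X$-a.s. Since the event $\{c(h(X))\neq\tilde c(\tilde h(X))\}$ depends only on $X$, it is also $P$-null. The integrands $\ell(c(h(X)),Y)$ and $\ell(\tilde c(\tilde h(X)),Y)$ therefore agree $P$-a.s., and their extended nonnegative expectations coincide.

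There is no substantive obstacle. The only point that needs care is the passage from a $P_X$-null set in $\mathcal X$ to a $P$-null set in $\mathcal X\times\mathcal Y$. This is already handled inside the proof of Proposition~\ref{prop:risk-composite}: the relevant set is the preimage of a $P_X$-null set under the coordinate projection, and $P_X$ is by definition the pushforward of $P$ under that projection. Because $\ell$ takes values in $[0,\infty]$, no integrability assumption is needed, and equality also holds when both risks are infinite.
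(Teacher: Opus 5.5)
Your proposal is correct and matches the paper, whose proof is the one-line observation that the corollary is Proposition~\ref{prop:risk-composite} applied to the defining condition of $\sim_P$. Your extra remarks, on passing from a $P_X$-null set to a $P$-null set and on needing no integrability because $\ell$ is $[0,\infty]$-valued, spell out what is already inside that proposition's proof.
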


\begin{proof}
This is Proposition~\ref{prop:risk-composite}.
\end{proof}

The converse is generally false: equality of risk does not imply equality of predictors. Thus scalar risk is coarser than the induced predictor.

\subsection{Representation properties and identifiability}
\label{subsec:representation-properties}

A representation property is a predicate or functional whose value is assigned to the representation component of a factorized predictor. Since representation spaces may vary across admissible pairs, the representation object includes both the measurable map and its codomain. This convention allows properties such as dimensionality, compression, or augmentation by additional coordinates to be treated explicitly.

\begin{definition}[Representation property]
\label{def:representation-property}
A Boolean-valued representation property on $\mathcal M$ is a map
\begin{equation}
\mathcal P:\mathcal M\to\{0,1\}
\end{equation}
whose value depends on the representation component alone. That is, if $(h,c),(h,c')\in\mathcal M$ have the same representation object $h$, then
\begin{equation}
\mathcal P(h,c)=\mathcal P(h,c').
\end{equation}
When no ambiguity arises, the $\mathcal P(h,c)$ is written as $\mathcal P(h)$.
\end{definition}

Examples include minimality relative to a target statistic, compression, invariance, equivariance, nuisance invariance, semantic alignment, and disentanglement \citep{fisher1922mathematical, tishby1999information, bengio2013representation, cohen2016group, locatello2019challenging,scholkopf2021toward}. Some such properties depend on auxiliary structure, such as a group action on
the input or representation space, a semantic reference structure, or a class of allowed coordinate transformations \citep{cohen2016group, higgins2017beta, locatello2019challenging, scholkopf2021toward}. Formally, such a property may be written as $\mathcal P(h;\mathcal S)$, where the auxiliary structure $\mathcal S$ is held fixed throughout the identifiability comparison. Equivalently, $\mathcal S$ may be regarded as part of the admissible model class $\mathcal M$. Although Definition~\ref{def:representation-property} is stated for Boolean-valued properties, the definitions below extend verbatim to properties valued in any set.

Two levels of identifiability are distinguished. The first asks whether a representation property is determined by the induced predictor $c\circ h$.

\begin{definition}[Identifiability from the induced predictor]
\label{def:prediction-identifiability}
A representation property $\mathcal P$ is identifiable from the induced predictor within $\mathcal M$ if, for all $(h,c),(\tilde h,\tilde c)\in\mathcal M$,
\begin{equation}
(h,c)\sim_P(\tilde h,\tilde c)
\quad\Longrightarrow\quad
\mathcal P(h)=\mathcal P(\tilde h).
\end{equation}
\end{definition}
For brevity, such a property is called \emph{predictor-identifiable}.

The second asks whether a representation property is determined by the scalar value of the supervised risk.

\begin{definition}[Identifiability from supervised risk]
\label{def:risk-identifiability}
A representation property $\mathcal P$ is identifiable from supervised risk within $\mathcal M$ if, for all $(h,c),(\tilde h,\tilde c)\in\mathcal M$,
\begin{equation}
R_P(c\circ h)=R_P(\tilde c\circ\tilde h)
\quad\Longrightarrow\quad
\mathcal P(h)=\mathcal P(\tilde h).
\end{equation}
\end{definition}
A property identifiable from the scalar supervised risk within $\mathcal M$ is
called \emph{risk-identifiable}.

This definition concerns only the scalar population-risk value. It does not assume access to the induced predictor, conditional risks, loss distributions, or an algorithmic selection rule. Because predictor equivalence implies equality of risk, identifiability from scalar risk imposes a stronger invariance requirement than identifiability from the induced predictor.

\begin{proposition}[Risk-identifiability implies induced-predictor identifiability]
\label{prop:risk-identifiability-stronger}
If $\mathcal P$ is identifiable from supervised risk within $\mathcal M$, then $\mathcal P$ is identifiable from the induced predictor within $\mathcal M$.
\end{proposition}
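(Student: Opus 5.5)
The argument is a direct chaining of implications, using Corollary~\ref{cor:prediction-implies-risk} as the bridge between the two definitions. Assume $\mathcal P$ is identifiable from supervised risk within $\mathcal M$ in the sense of Definition~\ref{def:risk-identifiability}. To verify Definition~\ref{def:prediction-identifiability}, fix arbitrary $(h,c),(\tilde h,\tilde c)\in\mathcal M$ with $(h,c)\sim_P(\tilde h,\tilde c)$. We must show $\mathcal P(h)=\mathcal P(\tilde h)$.

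\textbf{Step one.} Apply Corollary~\ref{cor:prediction-implies-risk}, equivalently Proposition~\ref{prop:risk-composite}, to the predictor-equivalent pair. This gives
\begin{equation*}
R_P(c\circ h)=R_P(\tilde c\circ\tilde h).
\end{equation*}
The equality holds as extended nonnegative reals, so no integrability assumption is required. The loss $\ell$ and the distribution $P$ are the same ones fixed in both definitions.

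\textbf{Step two.} Feed this equality into the hypothesis of risk-identifiability, which yields $\mathcal P(h)=\mathcal P(\tilde h)$. Since the pair was arbitrary, $\mathcal P$ is predictor-identifiable.

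I expect no genuine obstacle here. The only point needing care is that both definitions quantify over the same class $\mathcal M$ and use the same $P$ and $\ell$, so the risk-identifiability hypothesis applies to exactly the pairs produced in the first step. I would close with a remark that the argument extends verbatim to properties valued in any set, as the paper notes after Definition~\ref{def:representation-property}. I would also remark that the contrapositive is the form used later: a failure of predictor-identifiability, such as one witnessed by augmentation, implies a failure of risk-identifiability.
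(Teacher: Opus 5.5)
Your proposal is correct and matches the paper's proof: predictor equivalence yields equal risks by Corollary~\ref{cor:prediction-implies-risk}, and the risk-identifiability hypothesis then gives $\mathcal P(h)=\mathcal P(\tilde h)$. Your side remarks about extended nonnegative risks and the contrapositive form are also consistent with how the paper uses the result.
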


\begin{proof}
Assume $\mathcal P$ is identifiable from supervised risk. Let $(h,c),(\tilde h,\tilde c)\in\mathcal M$ satisfy
\begin{equation}
(h,c)\sim_P(\tilde h,\tilde c).
\end{equation}
By Corollary~\ref{cor:prediction-implies-risk},
\begin{equation}
R_P(c\circ h)=R_P(\tilde c\circ\tilde h).
\end{equation}
Risk-identifiability then gives
\begin{equation}
\mathcal P(h)=\mathcal P(\tilde h).
\end{equation}
Hence $\mathcal P$ is identifiable from the induced predictor.
\end{proof}

Equivalently, any representation property that is not identifiable from the induced predictor cannot be identifiable from scalar supervised risk.

\subsection{The fiber/descent criterion}
\label{subsec:fiber-descent}

The following descent characterization is the central formal criterion. It is an
instance of the general principle that an object is identifiable from an
observation map exactly when it is constant on the observational equivalence
classes induced by that map \citep{paulino1994identifiability,
basse2020general}. It expresses induced-predictor identifiability as constancy over all admissible factorizations of the same predictor.

\begin{theorem}[Fiber/descent criterion]
\label{thm:fiber-descent}
Let $\mathcal M$ be a class of admissible representation--head pairs, and let $\mathcal P:\mathcal M\to\{0,1\}$ be a representation property. The following are equivalent.

\begin{enumerate}
\item $\mathcal P$ is identifiable from the induced predictor within $\mathcal M$.

\item $\mathcal P$ is constant on the fibers of
\begin{equation}
\Pi:\mathcal M\to\Pi(\mathcal M),
\qquad
\Pi(h,c)=[c\circ h]_{P_X}.
\end{equation}
Equivalently, whenever
\begin{equation}
c(h(X))=\tilde c(\tilde h(X))
\qquad
P_X\text{-a.s.},
\end{equation}
one has
\begin{equation}
\mathcal P(h)=\mathcal P(\tilde h).
\end{equation}

\item There exists a map
\begin{equation}
\overline{\mathcal P}:\Pi(\mathcal M)\to\{0,1\}
\end{equation}
such that, for every $(h,c)\in\mathcal M$,
\begin{equation}
\mathcal P(h)=\overline{\mathcal P}([c\circ h]_{P_X}).
\end{equation}
\end{enumerate}
\end{theorem}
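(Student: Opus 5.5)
I will prove the equivalences in the cycle (1)$\Rightarrow$(2)$\Rightarrow$(3)$\Rightarrow$(1). The argument is purely set-theoretic. The only facts used are Definition~\ref{def:predictor-equivalence}, which states that $(h,c)\sim_P(\tilde h,\tilde c)$ iff $\Pi(h,c)=\Pi(\tilde h,\tilde c)$, and the convention from Definition~\ref{def:representation-property} that $\mathcal P(h)$ abbreviates $\mathcal P(h,c)$.

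\textbf{(1)$\Leftrightarrow$(2).} Take two pairs in the same fiber of $\Pi$. By Definition~\ref{def:predictor-equivalence} they are predictor-equivalent, and this is the same as the displayed almost-sure equality $c(h(X))=\tilde c(\tilde h(X))$. Definition~\ref{def:prediction-identifiability} then says exactly that $\mathcal P$ takes equal values on such pairs. So (1) and (2) are literally the same condition, restated through the identification of $\sim_P$-classes with fibers of $\Pi$.

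\textbf{(2)$\Rightarrow$(3).} This is the step with actual content. For each $[f]\in\Pi(\mathcal M)$, the fiber $\Pi^{-1}([f])$ is nonempty, since $[f]$ lies in the image of $\Pi$. Choose any $(h_0,c_0)$ in this fiber and set $\overline{\mathcal P}([f]):=\mathcal P(h_0,c_0)$. By (2), this value does not depend on which pair is chosen, so $\overline{\mathcal P}$ is well defined. Well-definedness makes the use of choice harmless; alternatively, one can define $\overline{\mathcal P}$ as the unique element of $\mathcal P(\Pi^{-1}([f]))$ and avoid choice entirely. Now let $(h,c)\in\mathcal M$ be arbitrary. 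It lies in the fiber over $[c\circ h]_{P_X}$, so by constancy $\mathcal P(h)=\overline{\mathcal P}([c\circ h]_{P_X})$.

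\textbf{(3)$\Rightarrow$(1).} Suppose $(h,c)\sim_P(\tilde h,\tilde c)$. Then $[c\circ h]_{P_X}=[\tilde c\circ\tilde h]_{P_X}$, and therefore
\[
\mathcal P(h)=\overline{\mathcal P}([c\circ h]_{P_X})=\overline{\mathcal P}([\tilde c\circ\tilde h]_{P_X})=\mathcal P(\tilde h).
\]

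The main point of care is well-definedness in (2)$\Rightarrow$(3), together with the fact that $\Pi$ is onto $\Pi(\mathcal M)$ by construction, so every fiber over the codomain is nonempty. I would also remark that nothing uses the codomain $\{0,1\}$, so the proof extends verbatim to set-valued properties, as claimed after Definition~\ref{def:representation-property}.
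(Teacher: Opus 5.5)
Your proof is correct and follows essentially the same route as the paper: (1)$\Leftrightarrow$(2) by unwinding the definitions, (2)$\Rightarrow$(3) by picking a representative in each nonempty fiber and using constancy for well-definedness, and the converse by the same chain of equalities. You close the cycle at (1) rather than at (2), which changes nothing since those two conditions are definitionally the same, and your choice-free formulation and your remark about arbitrary codomains are accurate small refinements.
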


\begin{proof}
The equivalence of $(1)$ and $(2)$ follows directly from Definition~\ref{def:prediction-identifiability}, Definition~\ref{def:predictor-equivalence}, and the fact that the fibers of $\Pi$ are the predictor-equivalence classes.

For $(2)\Rightarrow(3)$, let $[f]_{P_X}\in\Pi(\mathcal M)$. Choose any pair $(h,c)\in\mathcal M$ such that
\begin{equation}
[c\circ h]_{P_X}=[f]_{P_X}.
\end{equation}
Define
\begin{equation}
\overline{\mathcal P}([f]_{P_X})=\mathcal P(h).
\end{equation}
This is well-defined because if another pair $(\tilde h,\tilde c)\in\mathcal M$ also satisfies
\begin{equation}
[\tilde c\circ\tilde h]_{P_X}=[f]_{P_X},
\end{equation}
then $(h,c)$ and $(\tilde h,\tilde c)$ lie in the same fiber of $\Pi$. By $(2)$,
\begin{equation}
\mathcal P(h)=\mathcal P(\tilde h).
\end{equation}
Thus $\overline{\mathcal P}$ is independent of the chosen factorization, and by construction
\begin{equation}
\mathcal P(h)=\overline{\mathcal P}([c\circ h]_{P_X}).
\end{equation}

For $(3)\Rightarrow(2)$, suppose there exists $\overline{\mathcal P}$ such that
\begin{equation}
\mathcal P(h)=\overline{\mathcal P}([c\circ h]_{P_X})
\end{equation}
for all $(h,c)\in\mathcal M$. If
\begin{equation}
[c\circ h]_{P_X}=[\tilde c\circ\tilde h]_{P_X},
\end{equation}
then
\begin{equation}
\mathcal P(h)
=
\overline{\mathcal P}([c\circ h]_{P_X})
=
\overline{\mathcal P}([\tilde c\circ\tilde h]_{P_X})
=
\mathcal P(\tilde h).
\end{equation}
Hence $\mathcal P$ is constant on the fibers of $\Pi$.
\end{proof}

The criterion says that a representation property is identifiable from the induced predictor exactly when it descends to a well-defined property of that predictor. In the language of Figure~\ref{fig:predictor-equivalence-fibers}, $\mathcal P$ must assign the same value to every point in a given predictor fiber. If a property changes within a fiber, then observing the induced predictor cannot determine it.

\begin{example}[A positive case: injective heads make invariance descend]
\label{ex:positive-injective-invariance}
The following example illustrates that additional structural assumptions can make
a representation-level property descend to a predictor-level property. This is
analogous to the way architectural or structural constraints can enforce
invariance or equivariance in representation learning
\citep{cohen2016group, bronstein2021geometric}. Let a fixed measurable transformation $g:\mathcal X\to\mathcal X$ be given, and consider an admissible class $\mathcal M_{\mathrm{inj}}$ in which, for every $(h,c)\in\mathcal M_{\mathrm{inj}}$, the head $c$ is injective on all representation values attained by $h(X)$ and $h(g(X))$, up to $P_X$-null sets. Define
\begin{equation}
\mathcal P_g(h)
=
\mathbf 1\{h(g(X))=h(X)\; P_X\text{-a.s.}\}.
\end{equation}
For any $(h,c)\in\mathcal M_{\mathrm{inj}}$ with induced predictor $f=c\circ h$,
\begin{equation}
h(g(X))=h(X)
\quad\Longleftrightarrow\quad
f(g(X))=f(X)
\qquad
P_X\text{-a.s.},
\end{equation}
where the reverse implication uses injectivity of $c$ on the relevant representation values. Thus
\begin{equation}
\mathcal P_g(h)=\overline{\mathcal P}_g([c\circ h]_{P_X}),
\qquad
\overline{\mathcal P}_g([f]_{P_X})
=
\mathbf 1\{f(g(X))=f(X)\; P_X\text{-a.s.}\}.
\end{equation}
Within this restricted class, representation invariance to $g$ is identifiable from the induced predictor.
\end{example}

\begin{corollary}[Variation within a fiber implies non-identifiability]
\label{cor:variation-fiber}
Let $\mathcal M$ be a class of admissible representation--head pairs, and let $\mathcal P$ be a representation property. Suppose there exist $(h,c),(\tilde h,\tilde c)\in\mathcal M$ such that
\begin{equation}
[c\circ h]_{P_X}=[\tilde c\circ\tilde h]_{P_X},
\end{equation}
but
\begin{equation}
\mathcal P(h)\neq\mathcal P(\tilde h).
\end{equation}
Then $\mathcal P$ is not predictor-identifiable within $\mathcal M$. It is therefore not risk-identifiable within $\mathcal M$.
\end{corollary}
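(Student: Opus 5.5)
The plan is to argue by contraposition through Theorem~\ref{thm:fiber-descent}, and then pass to scalar risk using Proposition~\ref{prop:risk-identifiability-stronger}. No new construction is needed: the two given pairs already witness a fiber on which $\mathcal P$ is not constant.

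\textbf{Step 1.} By Definition~\ref{def:predictor-equivalence}, the hypothesis $[c\circ h]_{P_X}=[\tilde c\circ\tilde h]_{P_X}$ says exactly that $(h,c)\sim_P(\tilde h,\tilde c)$, that is, $\Pi(h,c)=\Pi(\tilde h,\tilde c)$. So the two pairs lie in a common fiber of $\Pi$.

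\textbf{Step 2.} Since $\mathcal P(h)\neq\mathcal P(\tilde h)$, the property $\mathcal P$ is not constant on that fiber, so condition $(2)$ of Theorem~\ref{thm:fiber-descent} fails. By the equivalence $(1)\Leftrightarrow(2)$, condition $(1)$ fails as well, so $\mathcal P$ is not predictor-identifiable within $\mathcal M$. One can also read this straight off Definition~\ref{def:prediction-identifiability}, since the implication there has a true antecedent and a false conclusion for this pair.

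\textbf{Step 3.} For the second claim, I will take the contrapositive of Proposition~\ref{prop:risk-identifiability-stronger}. If $\mathcal P$ were risk-identifiable, it would be predictor-identifiable, which contradicts Step 2. For a direct check, Corollary~\ref{cor:prediction-implies-risk} gives $R_P(c\circ h)=R_P(\tilde c\circ\tilde h)$, which violates the implication in Definition~\ref{def:risk-identifiability} for the same pair.

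There is no substantive obstacle. The only point needing care is that $\mathcal P(h)$ is shorthand for $\mathcal P(h,c)$, which is legitimate by Definition~\ref{def:representation-property}. With that in place, the fiber comparison concerns values of $\mathcal P$ on elements of $\mathcal M$, exactly as Theorem~\ref{thm:fiber-descent} requires.
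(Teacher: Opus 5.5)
Your proposal is correct and follows the same route as the paper's proof. You note that the two given pairs witness a fiber of $\Pi$ on which $\mathcal P$ is not constant, conclude from Theorem~\ref{thm:fiber-descent} that $\mathcal P$ is not predictor-identifiable, and obtain the risk claim by contradiction via Proposition~\ref{prop:risk-identifiability-stronger}. Your extra direct checks against Definitions~\ref{def:prediction-identifiability} and~\ref{def:risk-identifiability} are also valid.
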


\begin{proof}
The property $\mathcal P$ varies within a fiber of $\Pi$, so it fails the fiber-constancy condition in Theorem~\ref{thm:fiber-descent}. Hence $\mathcal P$ is not predictor-identifiable. If it were risk-identifiable, Proposition~\ref{prop:risk-identifiability-stronger} would imply predictor-identifiability, a contradiction.
\end{proof}

\subsection{Predictor-preserving augmentation}
\label{subsec:predictor-preserving-augmentation}

A canonical mechanism for generating variation within a predictor fiber is predictor-preserving augmentation. A representation can be augmented by admissible side information while the prediction head is extended so that it ignores the added coordinate. Figure~\ref{fig:predictor-preserving-augmentation} illustrates the construction: the representation is enlarged, but the induced predictor is unchanged.

\begin{figure}[t]
\centering
\includegraphics[width=0.92\linewidth]{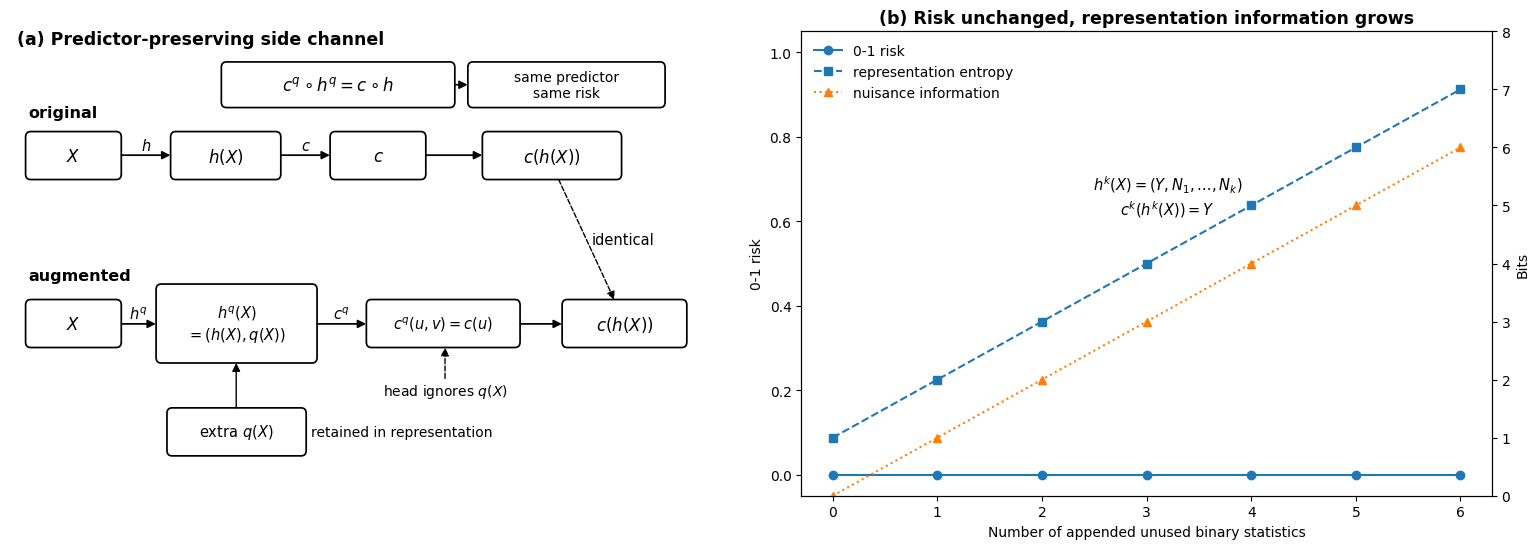}
\caption{Predictor-preserving augmentation. The original predictor is $c\circ h$. After augmentation, the representation becomes $h^q(x)=(h(x),q(x))$, while the augmented head $c^q(u,v)=c(u)$ ignores the added coordinate. Hence $c^q\circ h^q=c\circ h$ pointwise. The representation can retain additional information even though the supervised predictor and supervised risk are unchanged.}
\label{fig:predictor-preserving-augmentation}
\end{figure}

\begin{proposition}[Predictor-preserving augmentation]
\label{prop:predictor-preserving-augmentation}
Let $h:\mathcal X\to\mathcal H$ be a measurable representation and let $c:\mathcal H\to\mathcal A$ be a measurable head. Let $q:\mathcal X\to\mathcal Q$ be a measurable statistic, where $(\mathcal Q,\mathcal A_{\mathcal Q})$ is a measurable space. Define
\begin{equation}
h^q(x)=(h(x),q(x))
\end{equation}
and
\begin{equation}
c^q(u,v)=c(u).
\end{equation}
Then
\begin{equation}
c^q\circ h^q=c\circ h
\end{equation}
pointwise. Consequently,
\begin{equation}
R_P(c^q\circ h^q)=R_P(c\circ h).
\end{equation}
\end{proposition}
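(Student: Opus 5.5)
The plan is a direct verification in three short steps, and no earlier result is needed beyond Proposition~\ref{prop:risk-composite}.

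\textbf{Step 1: well-typed measurable maps.} Equip $\mathcal H\times\mathcal Q$ with the product $\sigma$-algebra $\mathcal A_{\mathcal H}\otimes\mathcal A_{\mathcal Q}$. The map $h^q=(h,q)$ is measurable because each coordinate is measurable, and product $\sigma$-algebras are generated by measurable rectangles. The head $c^q=c\circ\mathrm{pr}_1$ is measurable as a composition of the measurable projection $\mathrm{pr}_1:\mathcal H\times\mathcal Q\to\mathcal H$ with $c$. Hence $(h^q,c^q)$ is a genuine representation--head pair with representation space $\mathcal H\times\mathcal Q$.

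\textbf{Step 2: pointwise identity.} For each $x\in\mathcal X$ we compute
\begin{equation}
c^q(h^q(x))=c^q(h(x),q(x))=c(h(x)).
\end{equation}
So $c^q\circ h^q=c\circ h$ everywhere, and in particular $P_X$-almost surely.

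\textbf{Step 3: equal risks.} Since the equality in Step~2 holds $P_X$-a.s., Proposition~\ref{prop:risk-composite} applies with $\tilde h=h^q$ and $\tilde c=c^q$ and yields $R_P(c^q\circ h^q)=R_P(c\circ h)$. Alternatively, the two integrands $\ell(f(X),Y)$ agree pointwise, so their extended expectations coincide.

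The only point needing care is Step~1. Measurability of $h^q$ relies on using the product $\sigma$-algebra on the representation space. The identity itself is immediate from the definitions.
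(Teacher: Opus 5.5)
Your proposal is correct and follows the paper's proof. The paper's argument is exactly your Step 2, the pointwise computation $c^q(h(x),q(x))=c(h(x))$, followed by immediate equality of risks. Your Step 1, verifying measurability of $h^q$ and $c^q$ for the product $\sigma$-algebra $\mathcal A_{\mathcal H}\otimes\mathcal A_{\mathcal Q}$, is a detail the paper leaves implicit. It is not needed for the risk identity itself, since $c^q\circ h^q$ is literally the same function as $c\circ h$, but it does confirm that $(h^q,c^q)$ is a legitimate representation--head pair, which the later augmentation-closure arguments rely on.
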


Here $q$ is a measurable statistic on the chosen input space. In empirical settings, the auxiliary coordinate may instead be an annotation, metadata field, or transformation identifier, provided it is treated as admissible side information. Under a raw-observation convention, this corresponds either to working on an enlarged observation space or to considering model classes that can accept the auxiliary coordinate.

\begin{proof}
For every $x\in\mathcal X$,
\begin{equation}
(c^q\circ h^q)(x)
=
c^q(h(x),q(x))
=
c(h(x))
=
(c\circ h)(x).
\end{equation}
Thus the two composite predictors are identical pointwise. Equality of risks follows immediately.
\end{proof}

Proposition~\ref{prop:predictor-preserving-augmentation} shows that predictor-preserving augmentation produces another admissible factorization in the same predictor fiber. The construction therefore concerns non-identifiability at the level of admissible factorizations. 

\subsection{The augmentation obstruction}
\label{subsec:augmentation-obstruction}

The augmentation obstruction applies to model classes that are rich enough to contain predictor-preserving augmentations.

\begin{definition}[Closure under predictor-preserving augmentation]
\label{def:augmentation-closure}
A model class $\mathcal M$ is closed under predictor-preserving augmentation if, for every $(h,c)\in\mathcal M$ and every admissible measurable statistic $q:\mathcal X\to\mathcal Q$, the augmented pair
\begin{equation}
h^q(x)=(h(x),q(x)),
\qquad
c^q(u,v)=c(u)
\end{equation}
also belongs to $\mathcal M$.
\end{definition}

The qualifier ``admissible'' permits $\mathcal M$ to restrict the allowable auxiliary statistics, representation spaces, or augmented heads. For an unrestricted measurable model class, every measurable statistic $q$ is admissible. In restricted hypothesis classes, the obstruction applies whenever the class contains predictor-equivalent factorizations on which the representation property takes different values.

\begin{definition}[Augmentation-sensitive property]
\label{def:augmentation-sensitive}
Let $\mathcal M$ be closed under predictor-preserving augmentation. A representation property $\mathcal P$ is augmentation-sensitive within $\mathcal M$ if there exist $(h,c)\in\mathcal M$ and an admissible statistic $q:\mathcal X\to\mathcal Q$ such that
\begin{equation}
\mathcal P(h)\neq\mathcal P(h^q),
\end{equation}
where
\begin{equation}
h^q(x)=(h(x),q(x)).
\end{equation}
\end{definition}

\begin{theorem}[Augmentation obstruction]
\label{thm:augmentation-obstruction}
Let $\mathcal M$ be a class of admissible representation--head pairs closed under predictor-preserving augmentation. If a representation property $\mathcal P$ is augmentation-sensitive within $\mathcal M$, then $\mathcal P$ is not predictor-identifiable within $\mathcal M$. In particular, it is not risk-identifiable within $\mathcal M$.
\end{theorem}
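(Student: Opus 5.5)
The argument reduces directly to Corollary~\ref{cor:variation-fiber}; all that is needed is a concrete pair of factorizations in a single fiber of $\Pi$ on which $\mathcal P$ takes different values. Augmentation-sensitivity (Definition~\ref{def:augmentation-sensitive}) gives us $(h,c)\in\mathcal M$ and an admissible statistic $q:\mathcal X\to\mathcal Q$ with $\mathcal P(h)\neq\mathcal P(h^q)$, where $h^q(x)=(h(x),q(x))$. Pair this with the augmented head $c^q(u,v)=c(u)$. Closure under predictor-preserving augmentation (Definition~\ref{def:augmentation-closure}) then places $(h^q,c^q)$ in $\mathcal M$, so both $(h,c)$ and $(h^q,c^q)$ are admissible.

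Next, we check that the two pairs lie in the same fiber. By Proposition~\ref{prop:predictor-preserving-augmentation}, $c^q\circ h^q=c\circ h$ pointwise on $\mathcal X$. In particular the two predictors agree $P_X$-almost surely, so
\[
\Pi(h^q,c^q)=[c^q\circ h^q]_{P_X}=[c\circ h]_{P_X}=\Pi(h,c).
\]
That is, $(h,c)\sim_P(h^q,c^q)$ in the sense of Definition~\ref{def:predictor-equivalence}. Since $\mathcal P(h)\neq\mathcal P(h^q)$, the property varies within this fiber. Corollary~\ref{cor:variation-fiber} then yields both conclusions: $\mathcal P$ is not predictor-identifiable within $\mathcal M$, and, through Proposition~\ref{prop:risk-identifiability-stronger}, it is not risk-identifiable either.

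The one point needing care is that $\mathcal P(h^q)$ is well defined as a value on an element of $\mathcal M$. By Definition~\ref{def:representation-property}, $\mathcal P$ depends only on the representation object, which consists of the map together with its codomain $\mathcal H\times\mathcal Q$. So the $\mathcal P(h^q)$ appearing in the sensitivity hypothesis is the same as $\mathcal P(h^q,c^q)$, whatever admissible head accompanies $h^q$. With this identification made explicit, the remaining steps are immediate.
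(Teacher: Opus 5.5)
Your proposal is correct and follows the paper's proof essentially step for step: augmentation-sensitivity supplies $(h,c)$ and $q$, closure puts $(h^q,c^q)$ in $\mathcal M$, and Proposition~\ref{prop:predictor-preserving-augmentation} places both pairs in the same fiber of $\Pi$. Corollary~\ref{cor:variation-fiber} then gives both conclusions, and your added remark that $\mathcal P(h^q)$ is unambiguously $\mathcal P(h^q,c^q)$ by Definition~\ref{def:representation-property} is a useful clarification the paper leaves implicit.
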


\begin{proof}
Since $\mathcal P$ is augmentation-sensitive, there exist $(h,c)\in\mathcal M$ and an admissible statistic $q:\mathcal X\to\mathcal Q$ such that
\begin{equation}
\mathcal P(h)\neq \mathcal P(h^q).
\end{equation}
By closure under predictor-preserving augmentation, the augmented pair $(h^q,c^q)$ belongs to $\mathcal M$, where
\begin{equation}
h^q(x)=(h(x),q(x)),
\qquad
c^q(u,v)=c(u).
\end{equation}
By Proposition~\ref{prop:predictor-preserving-augmentation},
\begin{equation}
c^q\circ h^q=c\circ h.
\end{equation}
Thus $(h,c)$ and $(h^q,c^q)$ lie in the same predictor-equivalence class. However, $\mathcal P$ takes different values on their representations. By Corollary~\ref{cor:variation-fiber}, $\mathcal P$ is not predictor-identifiable and is not risk-identifiable.
\end{proof}

Closure under predictor-preserving augmentation is only a sufficient condition. The underlying obstruction is variation of $\mathcal P$ within a predictor fiber; augmentation closure is one general mechanism that guarantees such variation.

Theorem~\ref{thm:augmentation-obstruction} provides a broad witness for non-identifiability: one may append unused admissible information to the representation and let the head ignore it. The next section applies this obstruction to common representation-level desiderata.

\subsection{Restoring identifiability by restricting the admissible fiber}
\label{subsec:restoring-identifiability}

The preceding obstruction is conditional on the admissible class. A
representation property fails to be identifiable whenever the admissible class
contains predictor-equivalent factorizations on which the property differs.
Restoring identifiability therefore amounts to changing the relevant
equivalence classes, either by restricting the model class, adding assumptions,
or enriching the observational object \citep{paulino1994identifiability,
basse2020general, locatello2019challenging}. This can be done by restricting the admissible class,
selecting a canonical representative inside each predictor fiber, or enriching
the observations so that factorizations that were previously equivalent become
distinguishable.

Let $\mathcal M_0\subseteq\mathcal M$ be a restricted admissible class. The
projection relevant to this restricted problem is
\[
\Pi_0:\mathcal M_0\to \Pi_0(\mathcal M_0),
\qquad
\Pi_0(h,c)=[c\circ h]_{P_X}.
\]
A representation property $\mathcal P$ that fails to be identifiable on
$\mathcal M$ can become identifiable on $\mathcal M_0$ when the restriction
removes the predictor-preserving alternatives on which $\mathcal P$ varies.
The restoration condition is the fiber condition applied to the restricted
fibers of $\Pi_0$.

\begin{proposition}[Restoration by fiber restriction]
\label{prop:restoration-fiber-restriction}
Let $\mathcal M_0\subseteq\mathcal M$ be a restricted admissible class, and let
$\mathcal P:\mathcal M_0\to\{0,1\}$ be a representation property. Then
$\mathcal P$ is identifiable from the induced predictor within $\mathcal M_0$
if and only if
\[
\Pi_0(h,c)=\Pi_0(\tilde h,\tilde c)
\quad\Longrightarrow\quad
\mathcal P(h)=\mathcal P(\tilde h)
\]
for all $(h,c),(\tilde h,\tilde c)\in\mathcal M_0$.
Equivalently, $\mathcal P$ is identifiable within $\mathcal M_0$ exactly when
it is constant on the restricted fibers of $\Pi_0$.
\end{proposition}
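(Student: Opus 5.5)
The plan is to recognize Proposition~\ref{prop:restoration-fiber-restriction} as Theorem~\ref{thm:fiber-descent} applied with $\mathcal M_0$ in place of $\mathcal M$. Nothing in Definitions~\ref{def:predictor-equivalence} and~\ref{def:prediction-identifiability} or in Theorem~\ref{thm:fiber-descent} uses any property of the admissible class beyond its being a set of measurable representation--head pairs. So the first step is to check that $\mathcal M_0$ is itself a legitimate admissible class. Each of its elements is a pair $(h,c)$ with $h:\mathcal X\to\mathcal H_h$ and $c:\mathcal H_h\to\mathcal A$ measurable, inherited from $\mathcal M$. The second step is to check that $\mathcal P$, restricted to $\mathcal M_0$, is still a representation property in the sense of Definition~\ref{def:representation-property}. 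This holds because if $(h,c),(h,c')\in\mathcal M_0$, they also lie in $\mathcal M$; and if $\mathcal P$ is only given on $\mathcal M_0$, the defining condition is stated there directly.

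Next I would identify $\Pi_0$ with the projection $\Pi$ of Theorem~\ref{thm:fiber-descent} built from the class $\mathcal M_0$. This is immediate: $\Pi_0(h,c)=[c\circ h]_{P_X}$ is exactly the same formula, with image $\Pi_0(\mathcal M_0)$. Consequently the fibers of $\Pi_0$ are the $\sim_P$-equivalence classes of $\mathcal M_0$, meaning predictor equivalence between pairs both lying in $\mathcal M_0$. By Definition~\ref{def:predictor-equivalence}, $(h,c)\sim_P(\tilde h,\tilde c)$ if and only if $\Pi_0(h,c)=\Pi_0(\tilde h,\tilde c)$.

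Then the equivalence (1)$\Leftrightarrow$(2) of Theorem~\ref{thm:fiber-descent}, read within $\mathcal M_0$, gives both claims at once. Identifiability within $\mathcal M_0$ is equivalent to the displayed implication over all pairs in $\mathcal M_0$, and that implication is exactly constancy on the fibers of $\Pi_0$. One can also unwind this directly from Definition~\ref{def:prediction-identifiability} with $\mathcal M_0$ substituted. The implication there quantifies over pairs in $\mathcal M_0$ with $\sim_P$, and $\sim_P$ has just been rewritten as equality under $\Pi_0$.

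There is no real obstacle here. The only point needing care is the bookkeeping that fibers are taken relative to $\mathcal M_0$, not $\mathcal M$. A fiber of $\Pi_0$ is the intersection of a fiber of $\Pi$ with $\mathcal M_0$, so removing predictor-preserving alternatives shrinks fibers and can make $\mathcal P$ constant on them. I would state this explicitly to connect the proposition to the restoration interpretation, but it is not needed for the logical equivalence itself.
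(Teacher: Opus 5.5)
Your proposal is correct and matches the paper's proof. The paper's proof is the one-line observation that this is Theorem~\ref{thm:fiber-descent} applied to the restricted admissible class $\mathcal M_0$. Your additional bookkeeping just makes that substitution explicit: $\mathcal M_0$ is itself a valid admissible class, $\Pi_0$ is the corresponding projection, and its fibers are the fibers of $\Pi$ intersected with $\mathcal M_0$.
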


\begin{proof}
This is Theorem~\ref{thm:fiber-descent} applied to the restricted admissible
class $\mathcal M_0$.
\end{proof}

Restoration is the descent criterion applied after the admissible alternatives
have changed. In a broad class, predictor-preserving augmentation can append
unused coordinates and thereby change compression, invariance, nuisance
information, or semantic accessibility without changing the predictor. In a
restricted class, such augmentations may be excluded. Fixed-dimensional bottlenecks, equivariant architectures, injective heads,
minimality constraints, and explicit information constraints all shrink the
predictor fiber \citep{tishby1999information, alemi2017deep, cohen2016group,
bronstein2021geometric}. Once the remaining factorizations of a given predictor agree on the
representation property of interest, the property descends to the predictor
within that restricted class.

Assumptions restore identifiability by enforcing this fiber constancy. For
example, invariance of a representation to a transformation $g$ fails to
descend in a class that permits the augmentation
\[
h^q(x)=(h(x),q(x)),
\qquad
c^q(u,v)=c(u),
\]
with $q(g\cdot X)\neq q(X)$ with positive probability. The augmented
representation is transformation-sensitive while the predictor is unchanged.
By contrast, if the admissible class requires the head to be injective on the
relevant representation values, then representation invariance and predictor
invariance coincide:
\[
h(g\cdot X)=h(X)
\quad\Longleftrightarrow\quad
c(h(g\cdot X))=c(h(X))
\qquad P_X\text{-a.s.}
\]
Within this restricted class, the representation-level invariance property
descends to the predictor-level property
\[
f(g\cdot X)=f(X)
\qquad P_X\text{-a.s.},
\]
and is therefore identifiable from the induced predictor. The same pattern
applies to other properties: minimality is identifiable within a class that
admits only minimal sufficient representations; equivariance is identifiable
within a class whose representation spaces and heads enforce the specified
group action; and nuisance removal is identifiable once admissible alternatives
that retain unused nuisance coordinates have been excluded or directly
measured.

Identifiability can also be restored by supplementing the predictor with a
selection rule. Suppose a deterministic rule selects one admissible
factorization for each induced predictor,
\[
S:\Pi(\mathcal M)\to\mathcal M,
\qquad
\Pi(S([f]_{P_X}))=[f]_{P_X}.
\]
Then
\[
\mathcal P_S([f]_{P_X})
=
\mathcal P\big(S([f]_{P_X})\big)
\]
is a well-defined property of the predictor together with the selection rule.
This formalizes the role of optimization bias, initialization, regularization,
early stopping, and model selection as mechanisms that select representatives
from otherwise indistinguishable predictor fibers
\citep{hardt2016train, neyshabur2017exploring, soudry2018implicit, sevetlidis2026trainingmemory}. Such mechanisms choose representatives
from predictor fibers, so the identified object is the property of the
selected representative.

A third route is to enlarge the observational object. The non-identifiability
results above use the induced supervised predictor as the observable. If representation activations, probes, transformation responses, auxiliary labels, environments, interventions, or causal measurements are also observed,
then the observational equivalence classes become finer than the fibers of
$\Pi$ \citep{alain2017understanding, arjovsky2019invariant,peters2016causal, scholkopf2021toward}. Systems that are equivalent as supervised predictors may become
distinguishable once these additional measurements are included.

The positive and negative statements therefore have the same form. A
representation-level property is identifiable exactly when it is constant on
the relevant observational equivalence class. Supervised prediction alone uses
the fibers of $\Pi$. Architectural restrictions, objectives, selection rules,
auxiliary measurements, multiple environments, and causal assumptions replace
those fibers by smaller equivalence classes or by selected representatives.
The augmentation obstruction identifies the fiber variation responsible for
failure in broad classes; restoration occurs when additional structure removes
that variation.

\subsection{Consequences for Common Representation Properties}
\label{sec:examples}

After the obstruction and restoration mechanisms have been characterized at
the fiber level, the augmentation obstruction can be applied to common
representation-level desiderata. This section applies the augmentation obstruction to common representation-level properties, identifying conditions under which each property varies inside a predictor fiber.

Table~\ref{tab:augmentation-sensitive-properties} summarizes several such
properties. Each row should be read conditionally: it gives a common way in
which the property can change under predictor-preserving augmentation, provided
that the auxiliary statistic and the augmented representation belong to the
admissible model class.

\begin{table}[t]
\centering
\caption{Common augmentation-sensitive representation properties. Each row
describes auxiliary information that can change the representation property
while leaving the induced predictor unchanged, as in
Theorem~\ref{thm:augmentation-obstruction}.}
\label{tab:augmentation-sensitive-properties}
\small
\begin{tabular}{L{0.22\linewidth}L{0.38\linewidth}L{0.30\linewidth}}
\toprule
Property & How predictor-preserving augmentation can change it & Extra structure required \\
\midrule
Minimality relative to $B$ &
Append $q(X)$ with $\sigma(q(X))\not\subseteq\sigma(B(X))$ &
Target statistic $B$ \\

Compression &
Append high-information, high-dimensional, or high-rank information &
Entropy, dimension, code-length, rank, or information criterion \\

Nuisance invariance &
Append a statistic dependent on nuisance $N$ conditional on task $T$ &
Task and nuisance variables $(T,N)$ \\

Transformation invariance &
Append $q$ such that $q(g\cdot X)\neq q(X)$ with positive probability &
Group action on inputs \\

Equivariance &
Append a coordinate incompatible with the chosen representation-space action &
Group actions on input and representation spaces \\

Semantic accessibility &
Append a semantic attribute, annotation, or metadata field &
Semantic or generative reference structure \\
\bottomrule
\end{tabular}
\end{table}

The remainder of the section spells out two representative cases. The other
rows follow by the same fiber-variation argument.

\paragraph{Minimality.}
Let $B:\mathcal X\to\mathcal B$ be a measurable statistic regarded as
target-relevant for the supervised problem, in the spirit of sufficient or
minimal sufficient summaries in statistics and sufficient dimension reduction
\citep{fisher1922mathematical, lehmann1998theory, cook1998regression}. For example, $B(X)$ may represent a
Bayes-optimal decision, a conditional law, a semantic task variable, or another
externally specified statistic. Say that a representation $h$ is minimal
relative to $B$ when
\[
\sigma(h(X))=\sigma(B(X))
\qquad
\text{mod }P_X.
\]
If $h$ is minimal relative to $B$, but an admissible statistic $q$ satisfies
\[
\sigma(q(X))\not\subseteq \sigma(B(X))
\qquad
\text{mod }P_X,
\]
then the augmented representation $h^q$ is no longer minimal relative to $B$.
The reason is that $h^q$ contains information not contained in
$\sigma(B(X))$. Since the predictor is nevertheless preserved by
Proposition~\ref{prop:predictor-preserving-augmentation}, minimality relative
to $B$ varies within a predictor fiber. By
Theorem~\ref{thm:augmentation-obstruction}, it is therefore not
predictor-identifiable in any admissible class containing both factorizations.

This separates predictive sufficiency from representation minimality.
Supervised prediction may show that a representation supports accurate
prediction, but it does not by itself certify that unused information has been
excluded.

\paragraph{Transformation invariance.}
Let $G$ be a group acting measurably on $\mathcal X$, and fix a transformation
$g\in G$. Group actions provide the standard formal language for invariance and
equivariance in modern representation learning
\citep{cohen2016group, bronstein2021geometric}. Consider the representation-level invariance property
\[
h(g\cdot X)=h(X)
\qquad
P_X\text{-a.s.}
\]
If this property holds for $h$, but an admissible statistic $q$ satisfies
\[
q(g\cdot X)\neq q(X)
\]
with positive probability, then the augmented representation $h^q$ is not
invariant to $g$. Indeed, its added coordinate changes under the transformation.
The induced predictor is still preserved by
Proposition~\ref{prop:predictor-preserving-augmentation}. Hence representation
invariance to $g$ varies within a predictor fiber, and
Theorem~\ref{thm:augmentation-obstruction} implies that it is not
predictor-identifiable whenever both the invariant representation and its
predictor-preserving non-invariant augmentation are admissible.

The same reasoning applies to equivariance. Since equivariance is defined
relative to specified group actions on both the input and representation
spaces, an appended coordinate with an incompatible or transformation-sensitive
action can break equivariance without changing the supervised predictor.

The remaining properties in
Table~\ref{tab:augmentation-sensitive-properties} are instances of the same
criterion. Nuisance information, compression measures, and semantic accessibility are not
determined by the induced predictor whenever they can be altered by admissible
predictor-preserving augmentation. This complements existing work emphasizing
that shortcut, nuisance, disentangled, or causal structure requires additional
assumptions, measurements, or inductive biases beyond ordinary predictive
performance
\citep{geirhos2020shortcut, locatello2019challenging, arjovsky2019invariant,
scholkopf2021toward}.

\section{Experiments}
\label{sec:experiments}

The preceding sections establish an exact predictor-fiber obstruction. The experiments connect this obstruction to standard neural-network settings by separating predictor-level evidence from representation-level measurements.

Two complementary diagnostic studies are reported. The first constructs exact empirical predictor fibers and measures how standard representation diagnostics vary within them.  Because the augmented head ignores the appended coordinate, predictor preservation is guaranteed before any empirical measurement is made. The role of the empirical tables is therefore to document which familiar diagnostics---probe accuracy, invariance distance, effective rank, and domain decodability---can change inside an exactly fixed predictor fiber. The second study is different. It does not hold the predictor fixed pointwise. Instead, it compares Waterbirds models trained with different representation-level constraints after matching supervised performance. This study illustrates how common objectives can select different representation diagnostics among models with similar task behavior. Same-seed near-fiber results are reported separately in Appendix~\ref{app:near-fiber-details}.

\subsection{Algebraic witnesses: diagnostics can vary while the predictor is fixed}
\label{subsec:experiments-exact-witnesses}

The exact witnesses instantiate the construction
\[
h^q(x)=(h(x),q(x)),
\qquad
c^q(u,v)=c(u).
\]
Therefore,
\[
c^q(h^q(x))=c(h(x))
\]
for every evaluated input $x$. The equality is algebraic, not statistical. Consequently, zero predictor disagreement and zero logit difference are implementation checks rather than empirical findings.

The appended coordinate $q$ is chosen precisely because it changes a representation diagnostic: a semantic attribute changes semantic accessibility, a transformation-sensitive code changes transformation decodability or invariance distance, and a domain label changes domain decodability. These examples are intentionally direct. Their purpose is to exhibit finite-sample pairs of representations in the same predictor fiber on which common diagnostics take different values. These witnesses use an admissible observation space that includes annotations, metadata, or transformation identifiers. Under a raw-image-only convention, the same construction applies to the corresponding enlarged observation space.

\begin{table}[t]
\centering
\caption{Exact finite-sample witnesses of fiber variation. Predictor equality is guaranteed by the construction $c^q(u,v)=c(u)$; the zero-disagreement column is therefore an implementation check. The diagnostic column records representation-level quantities that change while the induced predictor is fixed pointwise.}
\label{tab:exact-witness-summary}
\scriptsize
\setlength{\tabcolsep}{3pt}
\begin{tabular}{L{0.13\linewidth}L{0.18\linewidth}L{0.18\linewidth}L{0.38\linewidth}}
\toprule
Dataset & Auxiliary statistic $q$ & Fiber certificate & Representation diagnostic changed \\
\midrule
CelebA & Non-target semantic attributes & $0$ disagreement; $0$ max logit difference & Attribute-probe AUC becomes $1.000$ for all $39$ appended attributes; mean $\Delta$AUC $0.137$. \\
CIFAR-10 & Learned transformation embedding & Accuracy and CE unchanged; $0$ disagreement & Transformation-probe accuracy changes from $0.5948$ to $0.7907$; invariance distances increase for all reported transformations. \\
STL-10 & Learned transformation embedding & Accuracy and CE unchanged; $0$ disagreement & Transformation-probe accuracy changes from $0.5732$ to $0.7034$; invariance distances increase for all reported transformations. \\
OfficeHome & One-hot domain label & Class accuracy and loss unchanged; $0$ disagreement & Domain-probe accuracy changes from $0.667\pm0.009$ to $1.000\pm0.000$. \\
PACS & One-hot domain label & Class accuracy and loss unchanged; $0$ disagreement & Domain-probe accuracy changes from $0.949\pm0.002$ to $1.000\pm0.000$. \\
\bottomrule
\end{tabular}
\end{table}

Because $c^q(h^q(x))=c(h(x))$ pointwise, the zero-disagreement column certifies that the compared systems lie in the same empirical predictor fiber. The remaining columns record deliberately induced changes in representation diagnostics: semantic attribute accessibility in CelebA, transformation decodability and transformation sensitivity in CIFAR-10 and STL-10, and domain decodability in OfficeHome and PACS.

Formally, $q$ is a measurable statistic on the chosen input space. Empirically, the appended coordinate may come from annotations, metadata, or transformation identifiers allowed as side information for the diagnostic. Under a strict raw-pixel input convention, the witness concerns either an enlarged observation space or a model class that can receive such side information.

\subsection{Constraint-selected representations at matched supervised performance}
\label{subsec:experiments-constraints}

The Waterbirds study examines a matched-performance setting. The supervised label is bird type, waterbird versus landbird, and the nuisance variable is background, water versus land. This dataset is useful because the nuisance variable is semantically meaningful and directly probeable. All models use an ImageNet-pretrained ResNet-18 backbone and a linear task head. Ten seeds are trained for each method.

Besides ERM, the comparison includes a deterministic bottleneck, a variational information bottleneck (VIB), an explicit augmentation-invariance penalty, and a supervised contrastive objective. For each seed and method, model selection matches the corresponding ERM run by choosing the configuration closest in validation accuracy and cross-entropy. The adversarial nuisance-invariance variant is reported in Appendix~\ref{app:waterbirds-additional}, since in this run it changed representation geometry but did not consistently reduce background decodability.

\begin{table}[t]
\centering
\caption{Matched supervised performance on Waterbirds. Values are mean
$\pm$ standard deviation over ten seeds.}
\label{tab:waterbirds-task}
\small
\begin{tabular}{lccc}
\toprule
Method & Test acc. & CE loss & Worst-group acc. \\
\midrule
ERM & $87.7 \pm 1.4$ & $0.375 \pm 0.047$ & $59.0 \pm 11.7$ \\
Bottleneck & $88.3 \pm 1.2$ & $0.339 \pm 0.039$ & $55.2 \pm 8.4$ \\
VIB & $88.8 \pm 1.0$ & $0.298 \pm 0.019$ & $54.5 \pm 7.8$ \\
AugInv & $89.0 \pm 1.3$ & $0.340 \pm 0.038$ & $63.1 \pm 11.9$ \\
SupCon & $88.8 \pm 1.8$ & $0.334 \pm 0.041$ & $60.0 \pm 7.6$ \\
\bottomrule
\end{tabular}
\end{table}

Table~\ref{tab:waterbirds-task} shows that the selected models have comparable task-level performance. The constrained models have test accuracies between $88.3\%$ and $89.0\%$, compared with $87.7\%$ for ERM. Cross-entropy is also comparable, and in several constrained models lower than ERM. Thus the representation differences below are not explained by a collapse in supervised performance.

\begin{table}[t]
\centering
\caption{Representation diagnostics on Waterbirds after supervised-performance
matching. Nuisance probe is balanced accuracy for predicting the background
from frozen representations.}
\label{tab:waterbirds-repr}
\small
\begin{tabular}{lccc}
\toprule
Method & Nuisance probe & Invariance dist. & Effective rank \\
\midrule
ERM & $88.7 \pm 0.5$ & $0.387 \pm 0.098$ & $35.7 \pm 18.9$ \\
Bottleneck & $82.1 \pm 2.7$ & $0.278 \pm 0.096$ & $1.9 \pm 0.4$ \\
VIB & $85.5 \pm 1.2$ & $0.444 \pm 0.133$ & $1.8 \pm 1.2$ \\
AugInv & $89.3 \pm 0.5$ & $0.326 \pm 0.098$ & $20.8 \pm 14.8$ \\
SupCon & $89.3 \pm 0.6$ & $0.361 \pm 0.089$ & $18.7 \pm 6.9$ \\
\bottomrule
\end{tabular}
\end{table}

The matched models differ sharply at the representation level, as shown in Table~\ref{tab:waterbirds-repr}. The deterministic bottleneck and VIB objectives select extremely low-rank representations: their effective ranks are approximately $1.9$ and $1.8$, compared with $35.7$ for ERM. These objectives also reduce linear decodability of the background variable. By contrast, the augmentation-invariance objective primarily reduces transformation distance while preserving high background decodability. Supervised contrastive training produces an intermediate representation: lower effective rank and mildly lower transformation distance than ERM, but no reduction in nuisance probe accuracy.

\begin{figure}[t]
\centering
\begin{minipage}{0.49\linewidth}
\centering
\includegraphics[width=\linewidth]{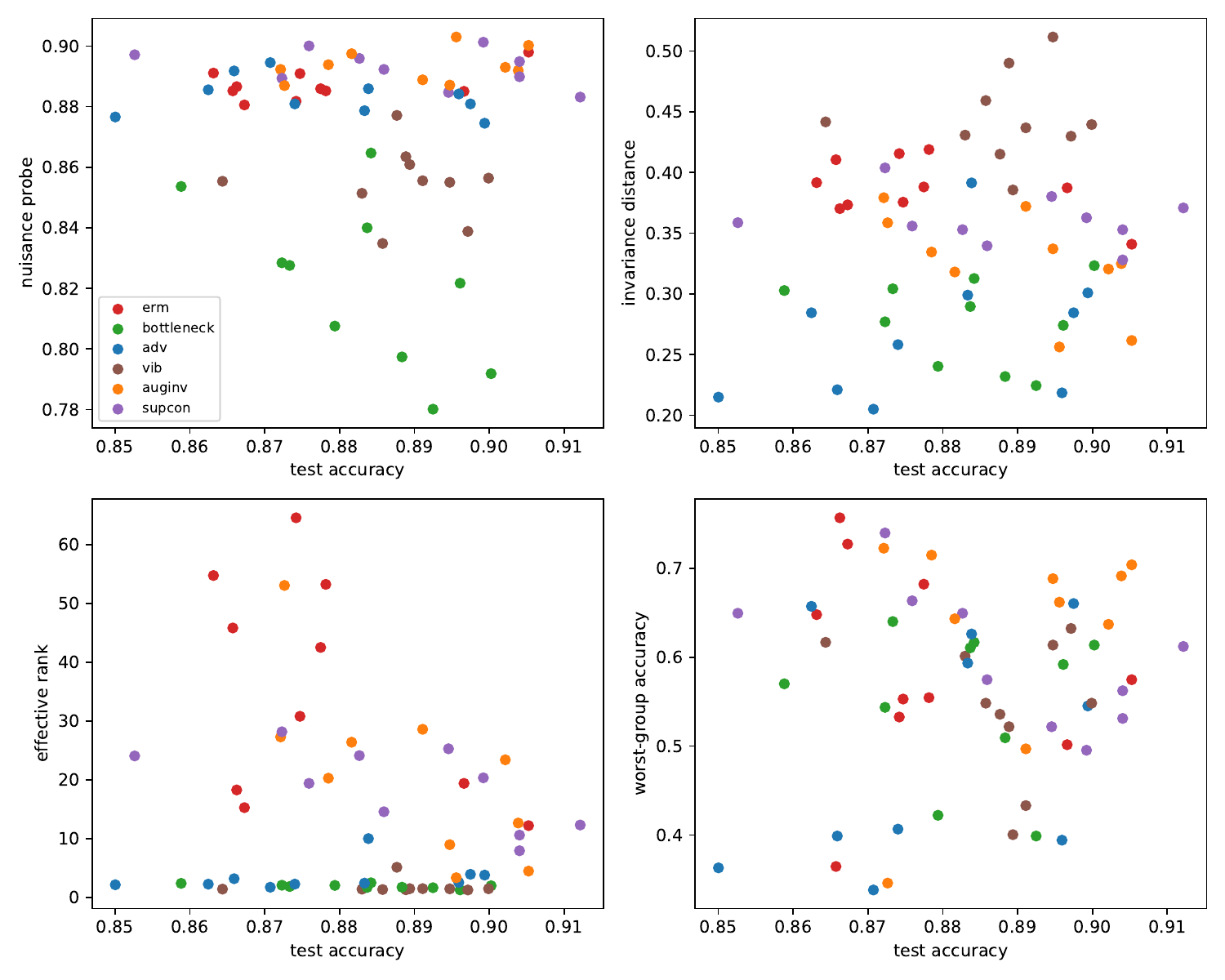}\\
{\small (a) Accuracy and diagnostics}
\end{minipage}
\hfill
\begin{minipage}{0.49\linewidth}
\centering
\includegraphics[width=\linewidth]{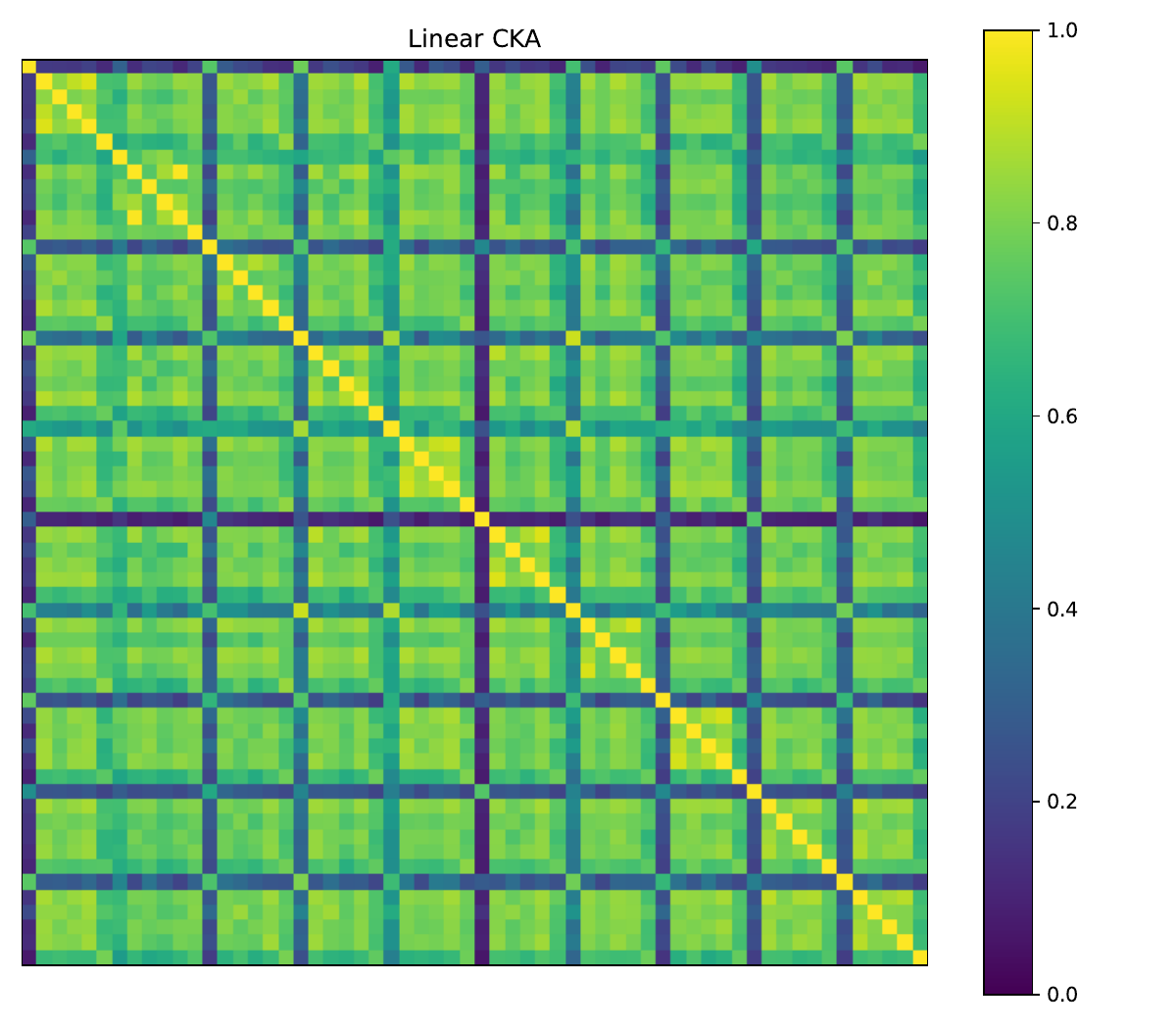}\\
{\small (b) CKA across matched models}
\end{minipage}
\caption{Waterbirds representation diagnostics after supervised-performance
matching. Models with comparable test accuracy occupy different regions of
nuisance decodability, transformation stability, effective rank, and CKA
similarity.}
\label{fig:waterbirds-diagnostics}
\end{figure}

Unlike the exact witnesses above, the Waterbirds study probes a matched-performance regime rather than a fixed predictor fiber. They instead show a matched-performance regime in which different constraints select different representation diagnostics at comparable supervised accuracy and loss.

The Waterbirds results also separate representation properties from predictor-level robustness. Worst-group accuracy does not move monotonically with nuisance probe accuracy. Augmentation-invariant training has high background decodability but better worst-group accuracy than ERM, whereas VIB reduces background decodability without improving worst-group accuracy. Thus nuisance information in the representation, final predictor behavior, and group robustness are related but distinct quantities.

\section{Discussion and Limitations}
\label{sec:discussion-limitations}

The fiber/descent criterion separates predictor-level evidence from representation-level conclusions. Supervised predictive behavior determines only those representation properties that are invariant across the relevant admissible predictor fiber. Properties such as compression, invariance, nuisance removal, equivariance, or semantic accessibility therefore require additional structure when they are not fiber-invariant. Architectural restrictions, bottlenecks, regularization, augmentation schemes, auxiliary objectives, causal assumptions, optimization bias, and direct representation-level measurements can serve this role by restricting the admissible class, selecting among predictor-equivalent factorizations, or measuring the representation itself. 

This dependence on the admissible class is essential. Optimizers, architectures, and initialization schemes can be viewed as selection mechanisms over the admissible factorization class. The present criterion specifies what must be invariant before such selection mechanisms are invoked. Restricted model classes may exclude the predictor-preserving augmentations used in the obstruction, and additional assumptions may make particular representation properties identifiable. The non-identifiability claim is therefore conditional: predictor-level evidence supports a representation-level property only when that property is invariant across the admissible alternatives.

The experiments instantiate this separation without exhausting the space of possible representation-learning mechanisms. The exact predictor-preserving witnesses show that common representation diagnostics can change while the predictor is fixed by construction. The Waterbirds experiments show that matched supervised performance can coexist with different representation diagnostics under different constraints. These diagnostics measure specific properties---linear decodability, effective rank, CKA similarity, nuisance decodability, and transformation stability---not a complete description of representation geometry or semantics. Broader studies across datasets, architectures, objectives, and modalities would further clarify how training procedures select representatives within predictor-equivalent or near-equivalent regimes.

\section{Conclusion}
\label{sec:conclusion}

This paper characterized when representation-level properties of supervised
factorized predictors are determined by predictor-level evidence. For the
projection
\[
\Pi(h,c)=[c\circ h]_{P_X},
\]
a representation property is identifiable from the induced predictor exactly
when it is constant on the fibers of $\Pi$, equivalently when it descends to a
well-defined property of the predictor. Since scalar supervised risk is coarser
than the induced predictor, any property that varies within a predictor fiber
is not identifiable from supervised risk alone.

Predictor-preserving augmentation gives a canonical witness of such variation:
unused admissible information can be appended to the representation while the
head ignores it. This leaves prediction unchanged but can alter compression,
invariance, nuisance information, equivariance, or semantic accessibility. The
empirical diagnostics instantiate this distinction through exact
predictor-preserving witnesses and matched-performance neural-network studies.

The criterion identifies where evidence for representation-level claims must enter: through architectural restrictions, regularization, bottlenecks, augmentation, auxiliary objectives, multiple environments, causal assumptions, optimization bias, or direct representation-level measurement. Supervised predictive behavior can support claims about
the induced predictor; claims about the internal representation require
architectural restrictions, regularization, bottlenecks, augmentation,
auxiliary objectives, multiple environments, causal assumptions, optimization
bias, or direct representation-level measurement.

\bibliographystyle{unsrtnat}
\bibliography{references}

\clearpage
\appendix

\section{Conceptual Illustrations of the Fiber Obstruction}
\label{app:conceptual-figures}

This appendix gives implementation details for the exact algebraic witnesses summarized in Section~\ref{subsec:experiments-exact-witnesses}. These witnesses are intentionally constructed: the auxiliary coordinate is appended to the representation, and the task head is defined to ignore it. These algebraic witnesses fix the supervised predictor exactly and measure how standard representation diagnostics vary within the resulting predictor fiber.

\subsection{Factorization, fibers, and descent}
\label{app:conceptual-fibers}

Figure~\ref{fig:app-supervised-risk-factorization} gives the most direct view of the supervised setup. A representation map $h$ and prediction head $c$ compose to form the predictor $f=c\circ h$, and the supervised risk evaluates the final prediction.

\begin{figure}[htbp]
\centering
\includegraphics[width=0.78\linewidth]{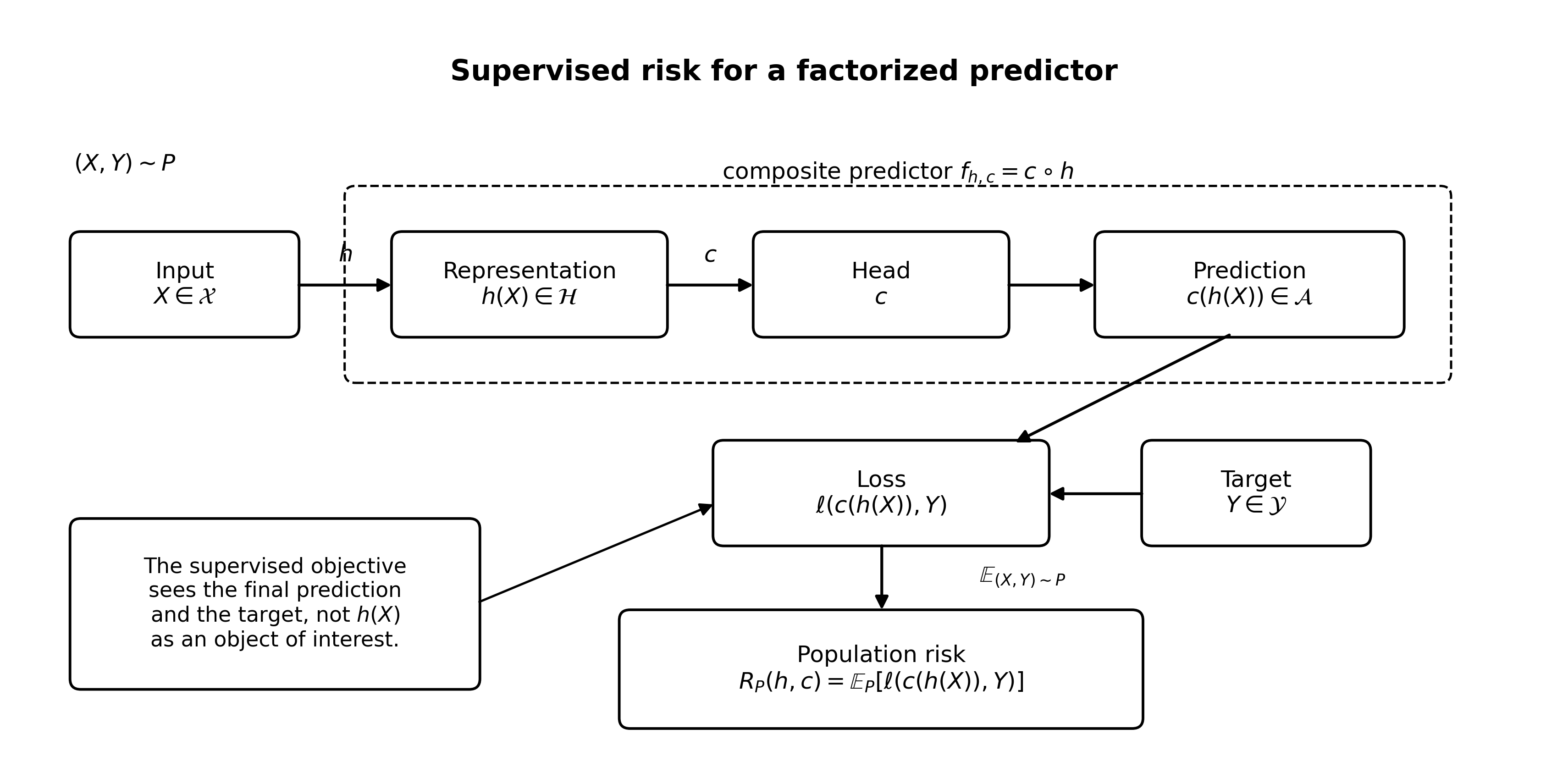}
\caption{Supervised risk for a factorized predictor. The supervised objective evaluates the composite prediction $f(x)=c(h(x))$. It does not directly evaluate the internal representation $h(x)$ or the particular factorization by which $f$ is implemented.}
\label{fig:app-supervised-risk-factorization}
\end{figure}

Figure~\ref{fig:app-fiber-descent-criterion} complements Theorem~\ref{thm:fiber-descent}. A representation property is identifiable from the induced predictor exactly when it is constant along each predictor fiber; this is the condition under which the property is already a property of the induced predictor.

\begin{figure}[htbp]
\centering
\includegraphics[width=0.90\linewidth]{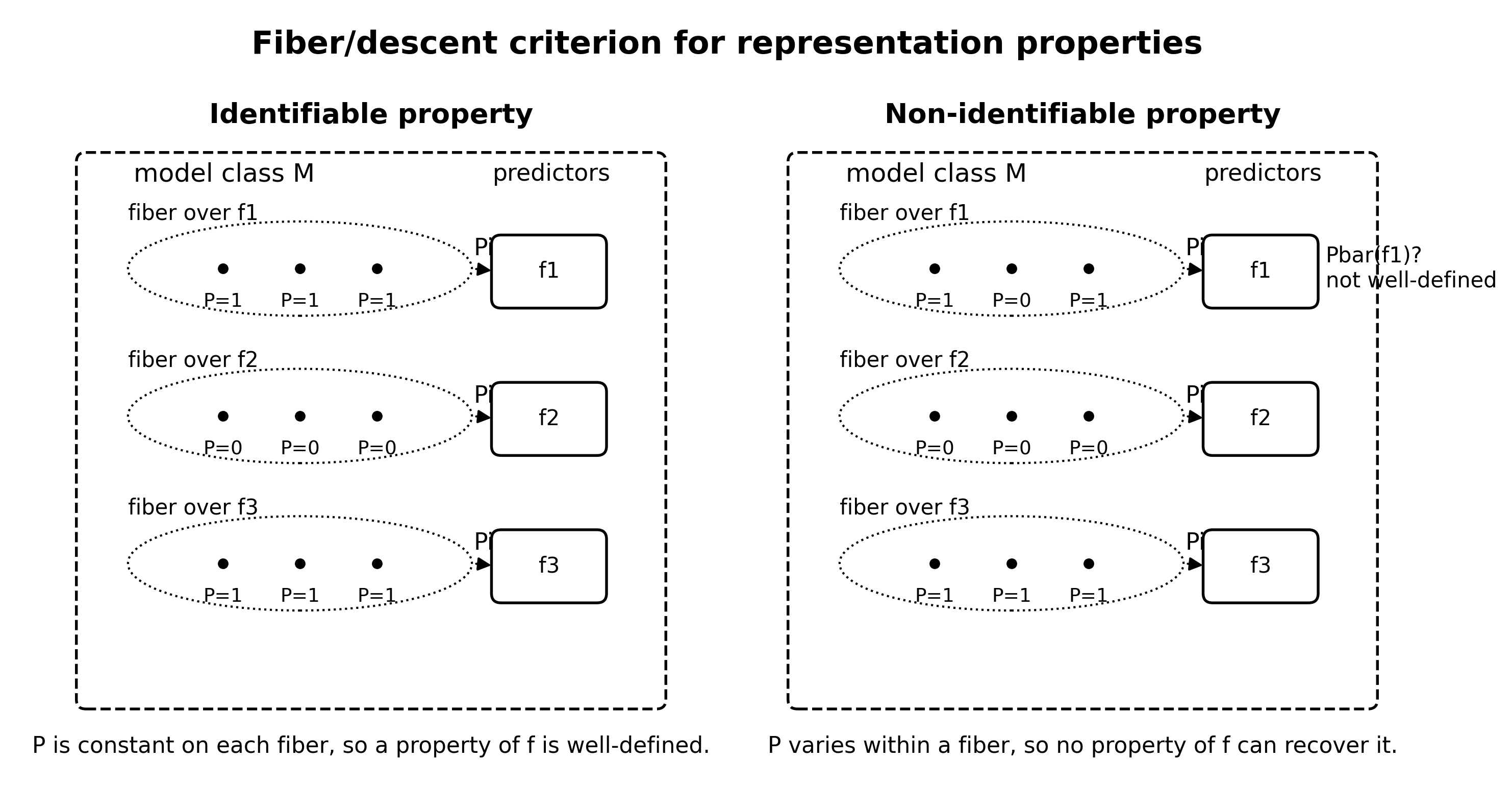}
\caption{The fiber/descent criterion. A representation property is identifiable from the induced predictor exactly when it is invariant across all admissible factorizations that induce the same predictor. If the property varies within a fiber of $\Pi:(h,c)\mapsto c\circ h$, then it cannot be recovered from predictor-level evidence.}
\label{fig:app-fiber-descent-criterion}
\end{figure}

\clearpage

\section{Additional Algebraic Empirical Case Studies}
\label{app:experiments}

This appendix gives implementation details for the exact predictor-preserving witnesses summarized in Section~\ref{subsec:experiments-exact-witnesses}.

Each case study begins with a trained supervised predictor
\begin{equation}
f(x)=c(h(x)),
\end{equation}
where $h$ is a penultimate representation and $c$ is the final prediction head. An augmented representation is then constructed as
\begin{equation}
h^q(x)=(h(x),q(x)),
\end{equation}
where $q(x)$ is an admissible auxiliary statistic or sample-level side-information field, and the augmented head is defined by
\begin{equation}
c^q(u,v)=c(u).
\end{equation}
Consequently, for every input $x$,
\begin{equation}
(c^q\circ h^q)(x)
=
c^q(h(x),q(x))
=
c(h(x))
=
(c\circ h)(x).
\end{equation}
Thus the original and augmented systems are exactly predictor-equivalent. Any change in a representation-level diagnostic after augmentation occurs inside a fixed predictor fiber. In the formal statement, $q$ is a measurable statistic of the selected input space. The case studies sometimes use annotations, metadata fields, or transformation identifiers as admissible side information; under a raw-pixel convention, this corresponds to an enlarged observation space or a model class that accepts such side information.

Three families of representation-level diagnostics are studied. First, on CelebA, semantic attributes are made linearly accessible from the representation while preserving the supervised predictor exactly. Second, on CIFAR-10 and STL-10, transformation sensitivity and transformation decodability are changed under fixed class prediction. Third, on PACS and OfficeHome, domain information is made perfectly decodable while preserving the class predictor exactly. The appendix records the case-study details, auxiliary controls, statistical procedures, secondary diagnostics, and empirical summaries supporting this representation-level interpretation.

\subsection{Case Study I: Semantic Attribute Accessibility on CelebA}
\label{app:case-study-celeba}

This case study gives a designed constructed semantic-accessibility witness. A non-target CelebA attribute is appended explicitly to the frozen representation, while the original \texttt{Smiling} head is extended to ignore the appended coordinate. It is therefore expected that a probe can recover the appended attribute. The point is that this change in semantic accessibility occurs between two systems with exactly the same \texttt{Smiling} predictor. Thus supervised accuracy on \texttt{Smiling} cannot by itself certify that non-target semantic attributes are absent or inaccessible from the representation.

\subsubsection{Setup}
\label{app:celeba-setup}

Let $X$ denote a CelebA face image and let $Y$ be the supervised target attribute. The supervised task is binary prediction of
\begin{equation}
Y=\texttt{Smiling}.
\end{equation}
All remaining CelebA attributes are treated as auxiliary semantic attributes and admissible side information for the diagnostic. For each such attribute, denoted $q_j(X)\in\{0,1\}$, a predictor-preserving augmentation of the learned representation is constructed.

A ResNet--18 classifier is trained with ImageNet initialization. Let
\begin{equation}
f(x)=c(h(x))
\end{equation}
be the resulting supervised predictor, where $h(x)\in\mathbb R^{512}$ is the penultimate-layer representation and $c$ is the final linear classification head. The model is trained on the supervised target $Y=\texttt{Smiling}$. The auxiliary attributes are excluded from supervised classifier training.

After training, the representation $h$ and head $c$ are frozen. For each auxiliary attribute $q_j$, define
\begin{equation}
h^{q_j}(x)=(h(x),q_j(x))\in\mathbb R^{513},
\end{equation}
and
\begin{equation}
c^{q_j}(u,v)=c(u).
\end{equation}
Therefore, for every input $x$,
\begin{equation}
c^{q_j}(h^{q_j}(x))=c(h(x)).
\end{equation}

Two kinds of quantities are evaluated. First, predictor preservation is verified by reporting supervised test loss, accuracy, balanced accuracy, AUC, prediction disagreement, and maximum logit difference. Second, attribute accessibility is measured by training linear probes to predict each auxiliary attribute $q_j$ from either $h(X)$ or $h^{q_j}(X)$. Probe performance is measured by AUC and balanced accuracy.

Two controls are included. The first appends an independent random scalar coordinate. The second appends a shuffled version of the attribute. These controls test whether the change in accessibility is caused by meaningful attribute information instead of dimensionality alone.

For uncertainty quantification, paired bootstrap confidence intervals with 2000 bootstrap resamples and paired permutation tests with 2000 permutations are used. Since the same test examples are evaluated before and after augmentation, all tests are paired. The tests are corrected across the 39 auxiliary attributes using the Benjamini--Hochberg procedure at false discovery rate $0.05$.

\subsubsection{Predictor preservation}
\label{app:celeba-predictor-preservation}

Table~\ref{tab:app-celeba-predictor-preservation} verifies exact predictor preservation. The supervised classifier achieves test accuracy $0.9299$ and test AUC $0.9828$ on \texttt{Smiling}. After augmentation, the supervised test loss, accuracy, balanced accuracy, and AUC are identical before and after augmentation. Prediction disagreement and maximum logit difference are both zero.

\begin{table}[h]
\centering
\caption{Predictor preservation on CelebA for the supervised target $Y=\texttt{Smiling}$. The augmented representation is $h^{q_j}(x)=(h(x),q_j(x))$, and the augmented head is $c^{q_j}(u,v)=c(u)$.}
\label{tab:app-celeba-predictor-preservation}
\begin{tabular}{lcc}
\toprule
Quantity & Original $c\circ h$ & Augmented $c^{q_j}\circ h^{q_j}$ \\
\midrule
Test loss & 0.1716 & 0.1716 \\
Test accuracy & 0.9299 & 0.9299 \\
Test balanced accuracy & 0.9299 & 0.9299 \\
Test AUC & 0.9828 & 0.9828 \\
Prediction disagreement & -- & 0.0000 \\
Maximum logit difference & -- & 0.0000 \\
\bottomrule
\end{tabular}
\end{table}

This table is the empirical counterpart of the predictor-fiber construction. For every auxiliary attribute $q_j$, the original and augmented systems lie in the same predictor fiber. Therefore, any representation-level quantity that differs between $h$ and $h^{q_j}$ is not determined by the supervised predictor.

\subsubsection{Aggregate attribute-accessibility results}
\label{app:celeba-aggregate}

For each of the 39 non-target CelebA attributes, a linear probe is trained on the frozen original representation $h(X)$ and another linear probe is trained on the augmented representation $h^{q_j}(X)=(h(X),q_j(X))$. Since $q_j(X)$ is explicitly included as a coordinate of $h^{q_j}(X)$, perfect recovery from $h^{q_j}(X)$ is expected. The important point is that this change in representation-level accessibility occurs while the supervised predictor is exactly unchanged.

Table~\ref{tab:app-celeba-attribute-summary} reports the aggregate results. The augmented representation achieves AUC $1.000$ for every appended attribute. The mean increase in probe AUC is $0.137$, with median increase $0.120$, minimum increase $0.023$, and maximum increase $0.320$. Random-coordinate and shuffled-coordinate controls produce essentially zero mean change in AUC, showing that the effect is not explained by increasing the representation dimension.

\begin{table}[h]
\centering
\caption{Aggregate CelebA attribute-accessibility results over the 39 attributes excluded from the supervised target. $\Delta$AUC denotes probe AUC after augmentation minus probe AUC before augmentation.}
\label{tab:app-celeba-attribute-summary}
\begin{tabular}{lc}
\toprule
Quantity & Value \\
\midrule
Number of auxiliary attributes & 39 \\
Test examples & 19,962 \\
Mean $\Delta$AUC, true attribute augmentation & 0.137 \\
Median $\Delta$AUC, true attribute augmentation & 0.120 \\
Minimum $\Delta$AUC, true attribute augmentation & 0.023 \\
Maximum $\Delta$AUC, true attribute augmentation & 0.320 \\
Mean $\Delta$ balanced accuracy & 0.353 \\
Augmented probe AUC & 1.000 for all attributes \\
Mean $\Delta$AUC, random-coordinate control & $5.7\times 10^{-5}$ \\
Mean $\Delta$AUC, shuffled-coordinate control & $-6.9\times 10^{-5}$ \\
FDR-significant attributes at level 0.05 & 39/39 \\
\bottomrule
\end{tabular}
\end{table}

The controls are important. Adding an arbitrary scalar coordinate leaves attribute accessibility unchanged. What changes the representation property is the semantic information carried by the appended coordinate. Since the supervised predictor is exactly preserved, this semantic information is invisible to supervised risk.

\subsubsection{Representative attribute-level results}
\label{app:celeba-selected}

Table~\ref{tab:app-celeba-selected-attributes} reports representative attribute-level results. The first ten rows are the largest AUC increases, and the last five rows are the smallest AUC increases. The largest increases occur for attributes that are moderately accessible from the original \texttt{Smiling} representation, such as \texttt{Big\_Lips}, \texttt{Pointy\_Nose}, and \texttt{Oval\_Face}. Attributes that are already highly accessible from the original representation, such as \texttt{Bald}, \texttt{Wearing\_Hat}, \texttt{Eyeglasses}, and \texttt{Male}, also show statistically significant increases after predictor-preserving augmentation.

\begin{table*}[h]
\centering
\caption{Representative CelebA attribute-accessibility results. The first ten rows are the largest AUC increases; the last five rows are the smallest AUC increases. All augmentations preserve the supervised \texttt{Smiling} predictor exactly. Confidence intervals are paired bootstrap 95\% intervals for $\Delta$AUC.}
\label{tab:app-celeba-selected-attributes}
\begin{tabular}{lrrrrrrr}
\toprule
Attribute $q_j$ & Prev. & AUC$(h)$ & AUC$(h^{q_j})$ & $\Delta$AUC & 95\% CI & Rand. AUC & Shuf. AUC \\
\midrule
\texttt{Big\_Lips} & 0.327 & 0.680 & 1.000 & 0.320 & [0.312, 0.328] & 0.681 & 0.680 \\
\texttt{Pointy\_Nose} & 0.286 & 0.714 & 1.000 & 0.286 & [0.279, 0.294] & 0.713 & 0.714 \\
\texttt{Oval\_Face} & 0.296 & 0.731 & 1.000 & 0.269 & [0.262, 0.276] & 0.731 & 0.731 \\
\texttt{Narrow\_Eyes} & 0.149 & 0.749 & 1.000 & 0.251 & [0.242, 0.260] & 0.750 & 0.750 \\
\texttt{Straight\_Hair} & 0.210 & 0.763 & 1.000 & 0.237 & [0.230, 0.245] & 0.764 & 0.763 \\
\texttt{Brown\_Hair} & 0.180 & 0.772 & 1.000 & 0.228 & [0.220, 0.237] & 0.771 & 0.770 \\
\texttt{Wearing\_Necklace} & 0.138 & 0.774 & 1.000 & 0.226 & [0.218, 0.234] & 0.773 & 0.773 \\
\texttt{Bushy\_Eyebrows} & 0.130 & 0.785 & 1.000 & 0.215 & [0.206, 0.225] & 0.785 & 0.784 \\
\texttt{Wearing\_Earrings} & 0.207 & 0.799 & 1.000 & 0.201 & [0.194, 0.209] & 0.800 & 0.800 \\
\texttt{Arched\_Eyebrows} & 0.284 & 0.803 & 1.000 & 0.197 & [0.191, 0.203] & 0.803 & 0.803 \\
\midrule
\texttt{Bald} & 0.021 & 0.977 & 1.000 & 0.023 & [0.019, 0.029] & 0.977 & 0.977 \\
\texttt{Wearing\_Hat} & 0.042 & 0.975 & 1.000 & 0.025 & [0.020, 0.031] & 0.975 & 0.974 \\
\texttt{Eyeglasses} & 0.065 & 0.973 & 1.000 & 0.027 & [0.023, 0.032] & 0.973 & 0.972 \\
\texttt{Male} & 0.386 & 0.949 & 1.000 & 0.051 & [0.049, 0.054] & 0.949 & 0.950 \\
\texttt{Wearing\_Lipstick} & 0.522 & 0.938 & 1.000 & 0.062 & [0.059, 0.066] & 0.937 & 0.938 \\
\bottomrule
\end{tabular}
\end{table*}

The statistical tests quantify the stability of the measured probe changes across the finite test set. They are not needed to establish predictor preservation, which follows algebraically from the construction. The reported adjusted $p$-values are at the resolution limit of the 2000-permutation test, approximately $5\times 10^{-4}$.

\subsubsection{Figures and interpretation}
\label{app:celeba-figures}

Figure~\ref{fig:app-celeba-delta-auc} shows the AUC increase for each appended attribute. Figure~\ref{fig:app-celeba-orig-vs-aug} compares probe AUC before and after augmentation; all points move to AUC $1.0$ after the corresponding attribute is appended. Figure~\ref{fig:app-celeba-controls} visualizes the contrast between true attribute augmentation and the controls. The true augmentations yield positive changes for all attributes, whereas the random and shuffled controls remain concentrated near zero.

\begin{figure}[h]
\centering
\includegraphics[width=0.95\linewidth]{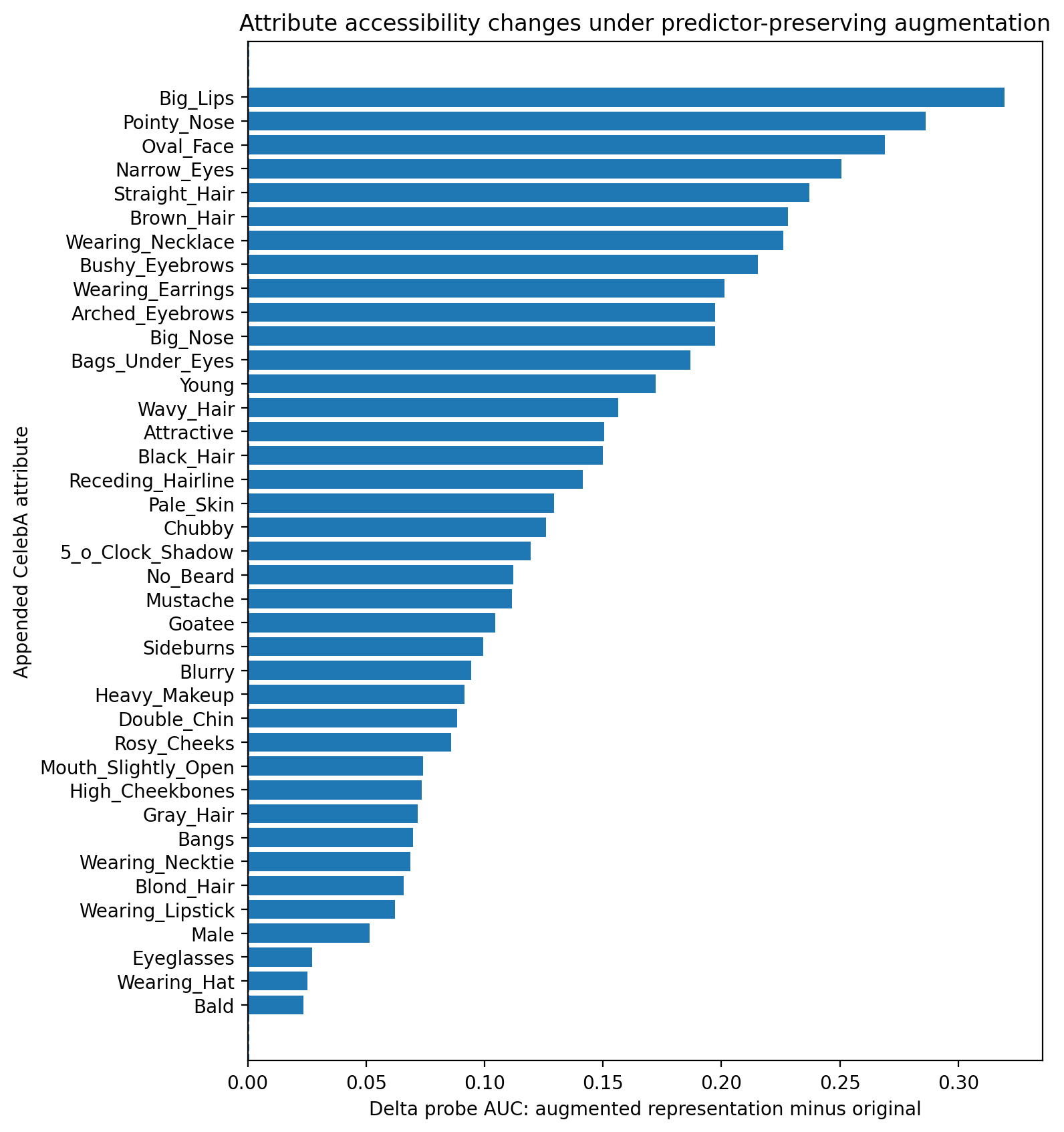}
\caption{Increase in auxiliary-attribute probe AUC after predictor-preserving augmentation on CelebA. The supervised target is \texttt{Smiling}; each bar corresponds to appending one non-target CelebA attribute $q_j$ to the representation. The supervised predictor is unchanged for all bars.}
\label{fig:app-celeba-delta-auc}
\end{figure}

\begin{figure}[h]
\centering
\includegraphics[width=0.75\linewidth]{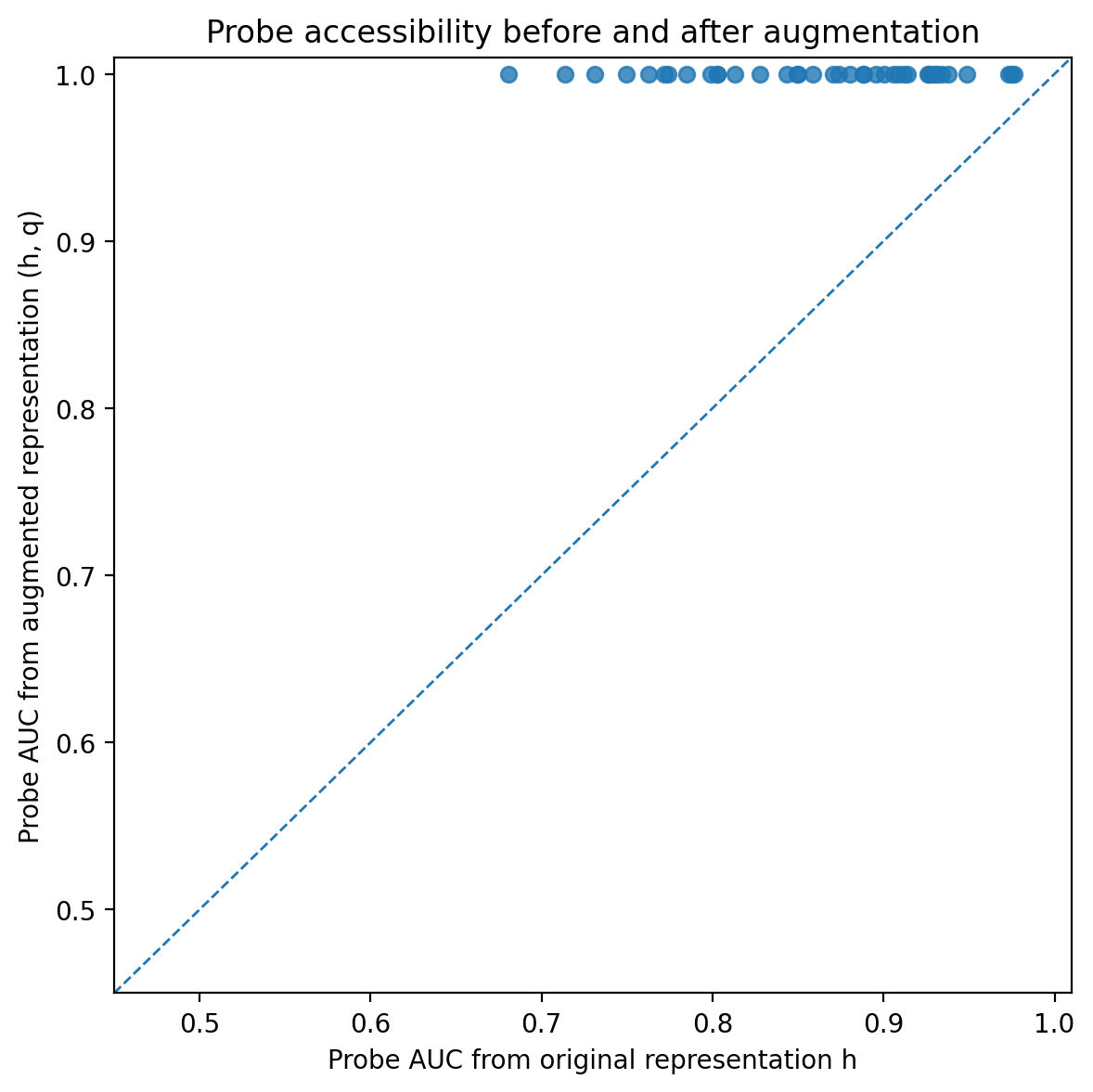}
\caption{Probe AUC from the original representation $h$ versus the augmented representation $h^{q_j}=(h,q_j)$. Augmentation makes the appended attribute directly accessible while leaving the supervised \texttt{Smiling} predictor exactly fixed.}
\label{fig:app-celeba-orig-vs-aug}
\end{figure}

\begin{figure}[h]
\centering
\includegraphics[width=0.8\linewidth]{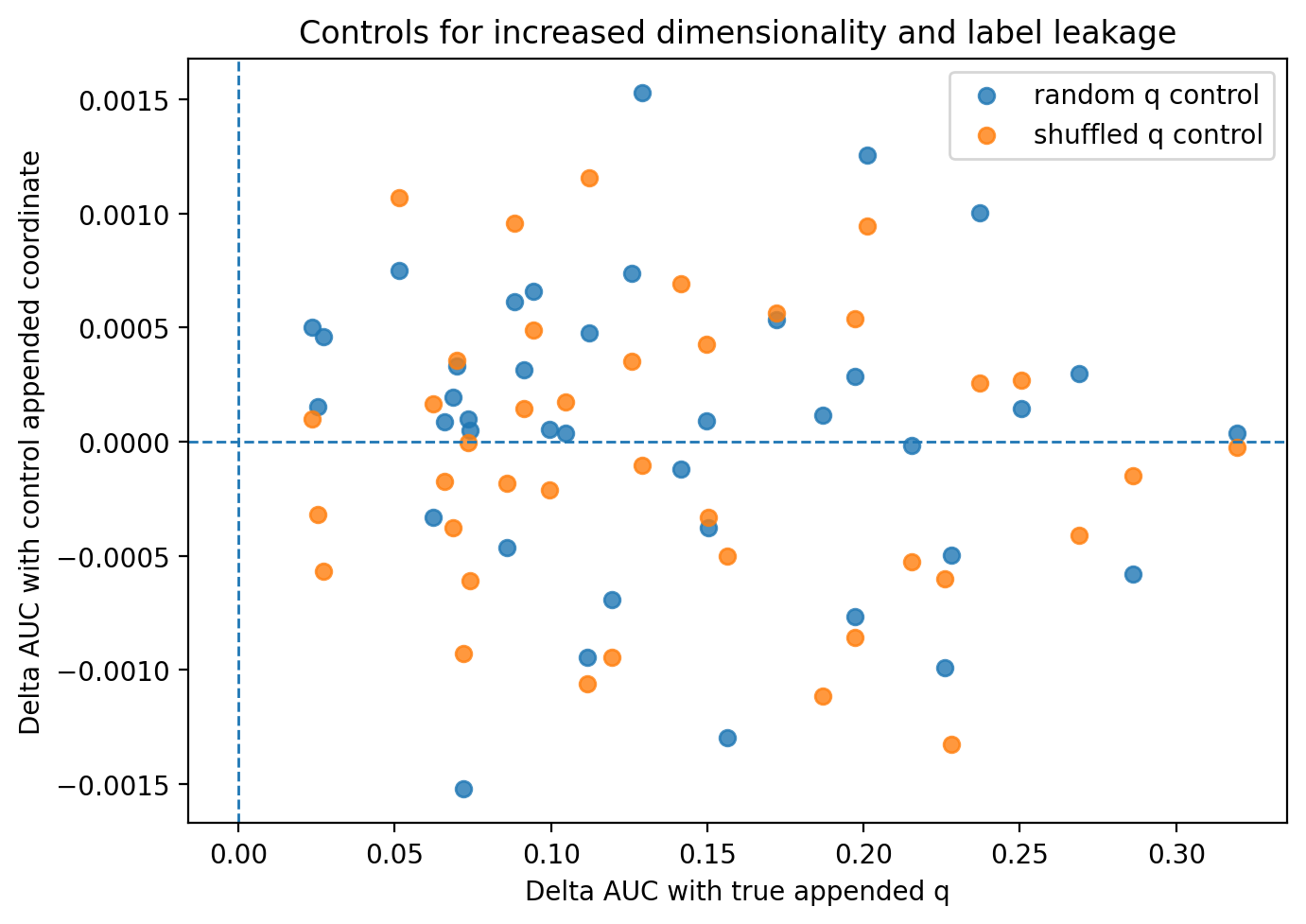}
\caption{CelebA controls. Appending an independent random coordinate or a shuffled attribute coordinate produces negligible changes in probe AUC, whereas appending the true attribute makes the corresponding semantic attribute directly accessible. The supervised \texttt{Smiling} predictor is unchanged in all cases.}
\label{fig:app-celeba-controls}
\end{figure}

Thus semantic-attribute accessibility changes under an exactly fixed \texttt{Smiling} predictor.

\subsection{Case Study II: Transformation Sensitivity Under Fixed Prediction}
\label{app:case-study-transform}

The second diagnostic gives a finite-sample witness for transformation sensitivity
and transformation decodability. The supervised task is the original
class-prediction problem on CIFAR-10 and STL-10. Given a trained classifier $f=c\circ h$, an augmented representation is formed
as $h^q(x)=(h(x),q(x))$, where $q$ carries transformation-sensitive
information. The augmented head is defined by $c^q(u,v)=c(u)$, so the added
coordinate is ignored by the class predictor. Thus class
predictions are fixed pointwise by construction. The construction fixes the class predictor pointwise and varies only unused representation coordinates. The resulting diagnostics quantify representation-level movement within an empirical predictor fiber, including changes in transformation decodability, invariance distance, and effective rank.

\subsubsection{Datasets and models}
\label{app:transform-datasets-models}

Transformation sensitivity is evaluated on CIFAR-10 and STL-10. Both are image-classification datasets with ten object categories, but they differ in image resolution, sample size, and visual variability. CIFAR-10 provides a relatively stable classification benchmark, whereas STL-10 is more challenging and contains higher-resolution images.

For each dataset, a supervised convolutional classifier is trained and the penultimate representation is extracted:
\begin{equation}
h(x)\in\mathbb R^{512}.
\end{equation}
The final linear classifier is used as the prediction head $c$. The supervised models are trained for the original class-prediction task. The auxiliary quantities $q(x)$ used below are excluded from the supervised head.

The experiment considers four input transformations:
\begin{enumerate}
    \item horizontal flip,
    \item color jitter,
    \item crop/resize,
    \item rotation.
\end{enumerate}
These transformations are used for representation diagnostics and leave the definition of the supervised classifier unchanged.

\subsubsection{Auxiliary representation augmentations}
\label{app:transform-augmentations}

Three meaningful auxiliary statistics and two controls are considered. Each auxiliary statistic is appended to the original representation while the prediction head is forced to ignore it.

The meaningful augmentations are:
\begin{enumerate}
    \item \textbf{Color histogram.} A low-level color histogram is appended to the representation. This statistic is expected to be sensitive to color perturbations.
    \item \textbf{Edge-orientation histogram.} An edge-orientation histogram is appended to the representation. This statistic is expected to be sensitive to geometric transformations such as flips, crops, and rotations.
    \item \textbf{Transformation embedding.} A learned auxiliary embedding is appended. This embedding is trained to encode transformation-related information and is therefore expected to be strongly transformation-sensitive.
\end{enumerate}

The controls are:
\begin{enumerate}
    \item \textbf{Random control.} Random auxiliary coordinates are appended. This controls for the effect of increased dimensionality.
    \item \textbf{Shuffled control.} A shuffled auxiliary feature is appended. This control is reported in the raw outputs as a secondary diagnostic because in this construction it still exhibits substantial transformation-predictive structure.
\end{enumerate}

For every variant,
\begin{equation}
h^q(x)=(h(x),q(x)),
\qquad
c^q(u,v)=c(u).
\end{equation}
Thus all augmented systems preserve the original class predictor exactly. The augmented coordinates are present in the representation but are unused by the supervised predictor.

\subsubsection{Evaluation metrics}
\label{app:transform-metrics}

Exact preservation of the supervised predictor is first verified. The reported quantities are task accuracy, cross-entropy loss, prediction-disagreement rate, and maximum absolute logit difference. The prediction-disagreement rate is
\begin{equation}
D_{\mathrm{pred}}
=
\frac{1}{n}\sum_{i=1}^n
\mathbf 1\{c(h(x_i))\neq c^q(h^q(x_i))\}.
\end{equation}
For exact predictor preservation,
\begin{equation}
D_{\mathrm{pred}}=0.
\end{equation}

Representation-level changes are then measured. For a transformation $g$, the empirical invariance distance is
\begin{equation}
I_g(h)
=
\frac{1}{n}
\sum_{i=1}^n
\frac{\|h(g\cdot x_i)-h(x_i)\|_2}{\|h(x_i)\|_2+\epsilon}.
\end{equation}
Larger values indicate greater transformation sensitivity, or weaker invariance. The reported change is
\begin{equation}
\Delta I_g=I_g(h^q)-I_g(h).
\end{equation}

Linear probes are also trained to predict the applied transformation from the representation. Let $A_{\mathrm{probe}}(h)$ denote transformation-probe accuracy from the original representation and $A_{\mathrm{probe}}(h^q)$ the corresponding accuracy from the augmented representation.

Finally, representation dimension and effective rank are reported. Effective rank is used as a coarse compression proxy: it measures the spread of representation variance across directions and detects representation-level changes under fixed prediction.

For paired invariance changes, nonparametric bootstrap confidence intervals and sign-flip permutation tests are computed. Empirical power is also estimated by subsampling at sample sizes
\begin{equation}
64,\;128,\;256,\;512,\;1024,\;2048.
\end{equation}
The statistical tests quantify the stability of the measured representation-level effects. The exact equality of the supervised predictor follows algebraically from the construction.

\subsubsection{Predictor preservation}
\label{app:transform-predictor-preservation}

Table~\ref{tab:app-transform-predictor-preservation} verifies that the supervised predictor is preserved exactly. Across all augmentation variants, task accuracy, cross-entropy loss, predicted labels, and logits are unchanged.

\begin{table}[h]
\centering
\caption{Predictor preservation for transformation-sensitivity diagnostics. The augmented representations are of the form $h^q(x)=(h(x),q(x))$, and the augmented head is $c^q(u,v)=c(u)$.}
\label{tab:app-transform-predictor-preservation}
\begin{tabular}{lccccc}
\toprule
Dataset & Accuracy $h$ & Accuracy $h^q$ & CE $h$ & CE $h^q$ & Disagreement \\
\midrule
CIFAR-10 & $0.9313$ & $0.9313$ & $0.4124$ & $0.4124$ & $0.0000$ \\
STL-10   & $0.7636$ & $0.7636$ & $0.7307$ & $0.7307$ & $0.0000$ \\
\bottomrule
\end{tabular}
\end{table}

This table is central to the interpretation. The augmented representations differ from the original representation, but the supervised predictor does not. Therefore, any subsequent change in invariance, transformation decodability, effective rank, or dimensionality is a representation-level change under a fixed supervised input-output map.

\subsubsection{Transformation information becomes more accessible}
\label{app:transform-probes}

Table~\ref{tab:app-transform-probe} reports transformation-probe accuracy for predicting the applied transformation from the original and augmented representations. The learned transformation embedding produces a large increase in probe accuracy on both datasets, while the random control does not improve probe accuracy.

\begin{table}[h]
\centering
\caption{Transformation-probe accuracy under fixed class prediction. The learned transformation embedding increases linear accessibility of transformation information, while the random control does not.}
\label{tab:app-transform-probe}
\begin{tabular}{lccccc}
\toprule
Dataset & $h$ & $h+$color & $h+$edge & $h+$transform emb. & $h+$random \\
\midrule
CIFAR-10 & $0.5948$ & $0.6245$ & $0.6033$ & $0.7907$ & $0.5940$ \\
STL-10   & $0.5732$ & $0.5935$ & $0.5802$ & $0.7034$ & $0.5685$ \\
\bottomrule
\end{tabular}
\end{table}

The original representations already contain nontrivial transformation information. This is expected: supervised image classifiers need not discard transformation-sensitive information. However, appending transformation-sensitive auxiliary information substantially increases linear accessibility. On CIFAR-10, the learned transformation embedding increases probe accuracy from $59.48\%$ to $79.07\%$. On STL-10, it increases probe accuracy from $57.32\%$ to $70.34\%$. These increases occur while task predictions and losses are exactly unchanged.

The random control has the same augmented dimensionality as the learned transformation embedding and leaves transformation-probe accuracy unchanged. The increase is caused by appending transformation-informative coordinates and not by increasing representation dimension.

\begin{figure}[h]
\centering
\begin{minipage}{0.48\linewidth}
    \centering
    \includegraphics[width=\linewidth]{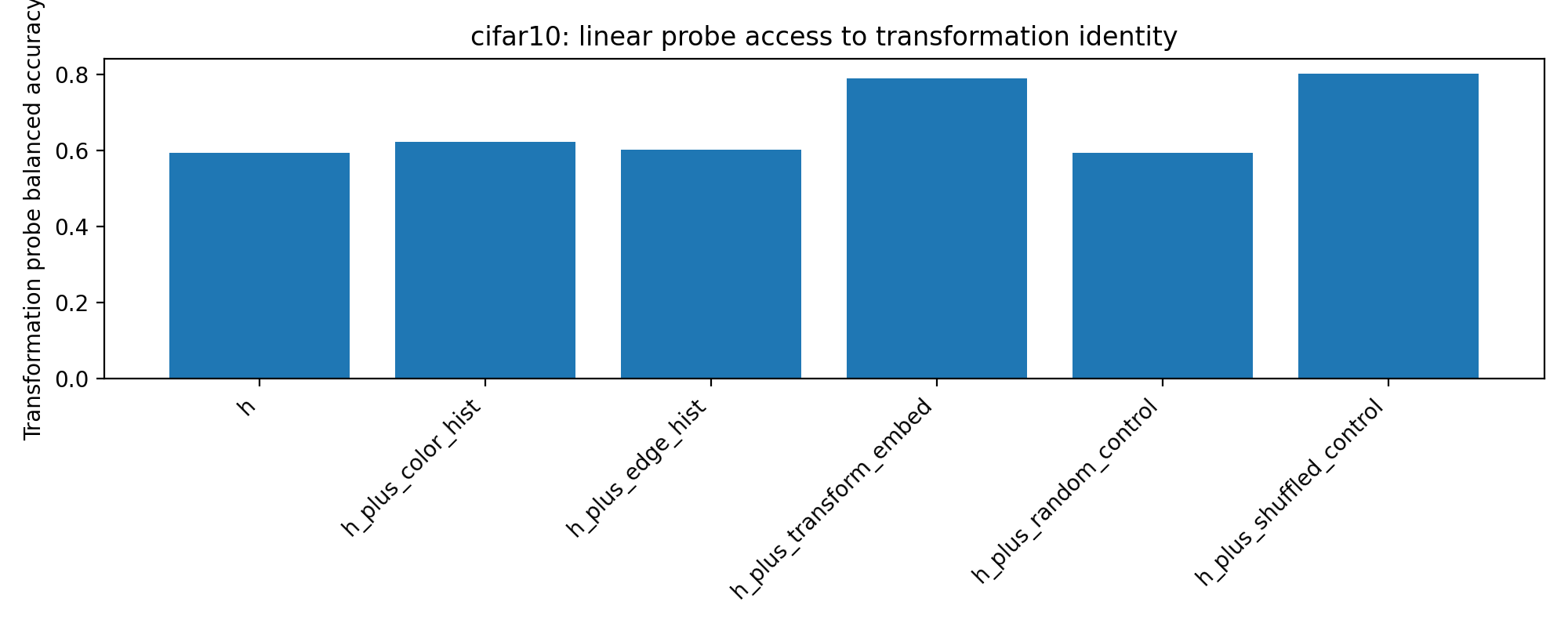}
    \caption*{CIFAR-10}
\end{minipage}
\hfill
\begin{minipage}{0.48\linewidth}
    \centering
    \includegraphics[width=\linewidth]{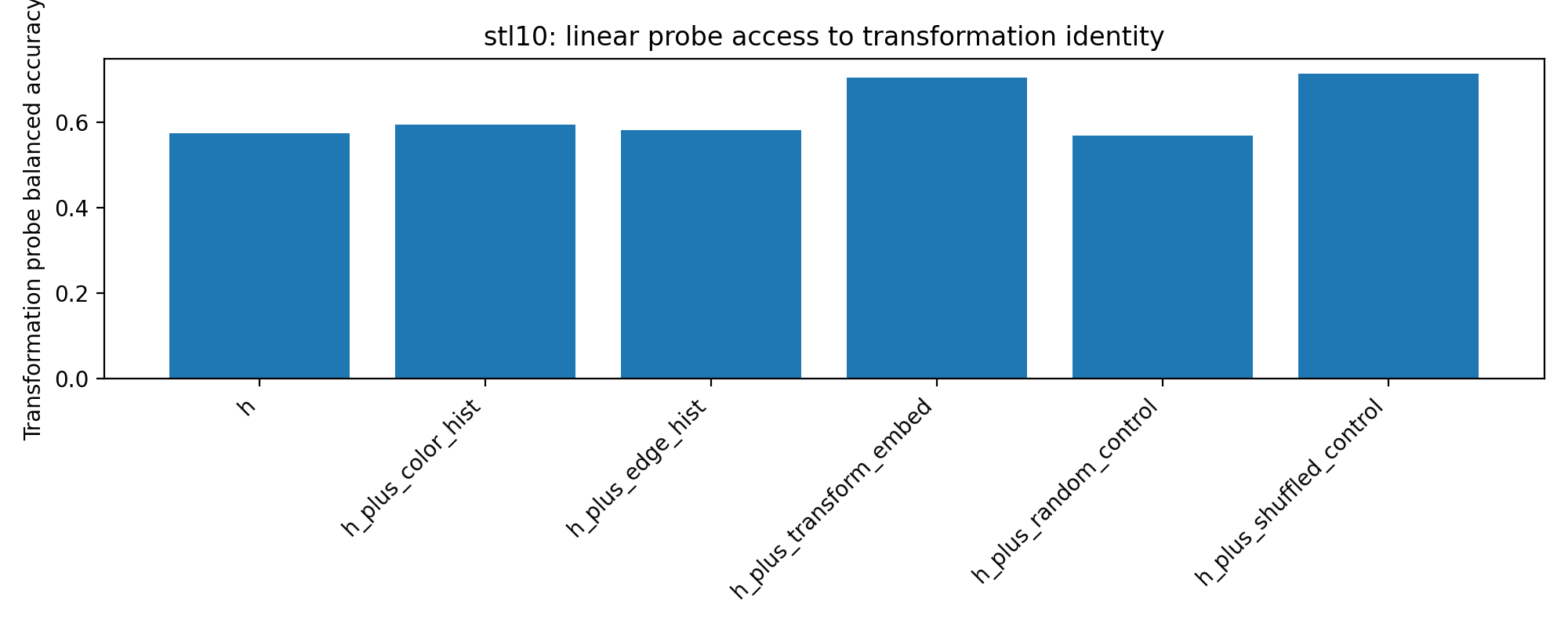}
    \caption*{STL-10}
\end{minipage}
\caption{Transformation-probe accuracy for the original and augmented representations. The supervised predictor is identical across all variants within each dataset. Nevertheless, transformation information becomes substantially more accessible after appending a learned transformation-sensitive embedding.}
\label{fig:app-transform-probe}
\end{figure}

\subsubsection{Transformation invariance changes under fixed prediction}
\label{app:transform-invariance}

Table~\ref{tab:app-transform-invariance-transform-embed} reports $\Delta I_g$ for the learned transformation embedding. Positive values indicate that the augmented representation is less invariant to the corresponding transformation.

\begin{table}[h]
\centering
\caption{Change in invariance distance after appending the learned transformation embedding. Positive values indicate increased transformation sensitivity. The supervised predictor is unchanged in every row.}
\label{tab:app-transform-invariance-transform-embed}
\begin{tabular}{lcccc}
\toprule
Dataset & Horizontal flip & Color jitter & Crop/resize & Rotation \\
\midrule
CIFAR-10 & $+0.0771$ & $+0.1538$ & $+0.1384$ & $+0.0544$ \\
STL-10   & $+0.1118$ & $+0.2300$ & $+0.2314$ & $+0.1203$ \\
\bottomrule
\end{tabular}
\end{table}

The learned transformation embedding consistently increases transformation sensitivity across both datasets and all transformations. The effect is especially large for color jitter and crop/resize. On STL-10, for example, the change is $+0.2300$ for color jitter and $+0.2314$ for crop/resize. These are substantial representation-level changes despite exact preservation of the supervised predictor.

The handcrafted augmentations provide interpretable controls. Color histograms strongly affect sensitivity to color jitter, whereas edge histograms more directly affect geometric transformations. Table~\ref{tab:app-transform-invariance-handcrafted} reports representative changes.

\begin{table}[h]
\centering
\caption{Representative changes in invariance distance for handcrafted auxiliary statistics. Color histograms primarily affect color perturbations, whereas edge histograms affect geometric transformations. The supervised predictor is unchanged in all rows.}
\label{tab:app-transform-invariance-handcrafted}
\begin{tabular}{llcc}
\toprule
Dataset & Transformation & $h+$color hist. & $h+$edge hist. \\
\midrule
CIFAR-10 & Horizontal flip & $-0.0059$ & $+0.0461$ \\
CIFAR-10 & Color jitter    & $+0.1349$ & $+0.0047$ \\
CIFAR-10 & Crop/resize     & $-0.0044$ & $+0.0010$ \\
CIFAR-10 & Rotation        & $-0.0374$ & $+0.0238$ \\
\midrule
STL-10 & Horizontal flip & $-0.0046$ & $+0.0629$ \\
STL-10 & Color jitter    & $+0.1542$ & $+0.0016$ \\
STL-10 & Crop/resize     & $-0.0064$ & $+0.0043$ \\
STL-10 & Rotation        & $-0.0253$ & $+0.0420$ \\
\bottomrule
\end{tabular}
\end{table}

These results show that different auxiliary coordinates change different representation-level invariance diagnostics. This is exactly the behavior predicted by the augmentation obstruction: the head can ignore the added coordinates, but representation-level properties remain sensitive to them.

\begin{figure}[h]
\centering
\begin{minipage}{0.48\linewidth}
    \centering
    \includegraphics[width=\linewidth]{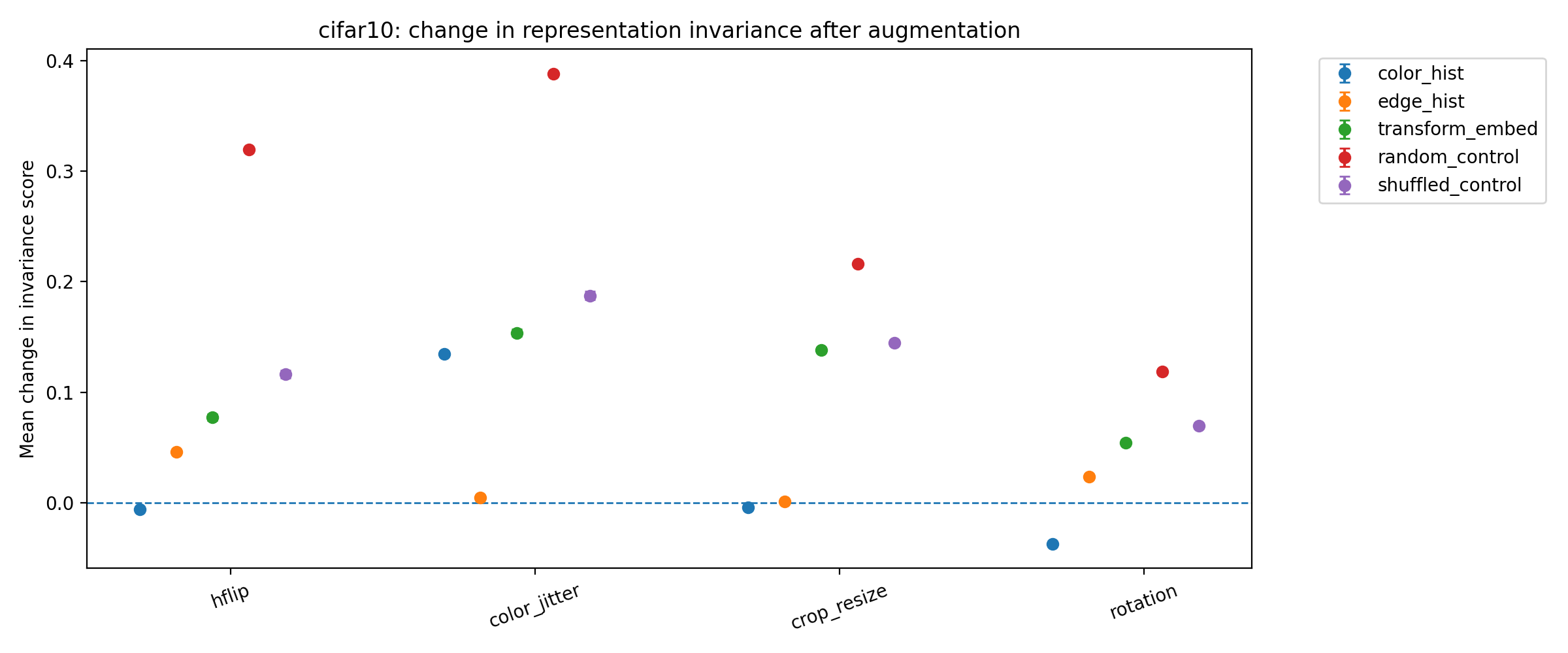}
    \caption*{CIFAR-10}
\end{minipage}
\hfill
\begin{minipage}{0.48\linewidth}
    \centering
    \includegraphics[width=\linewidth]{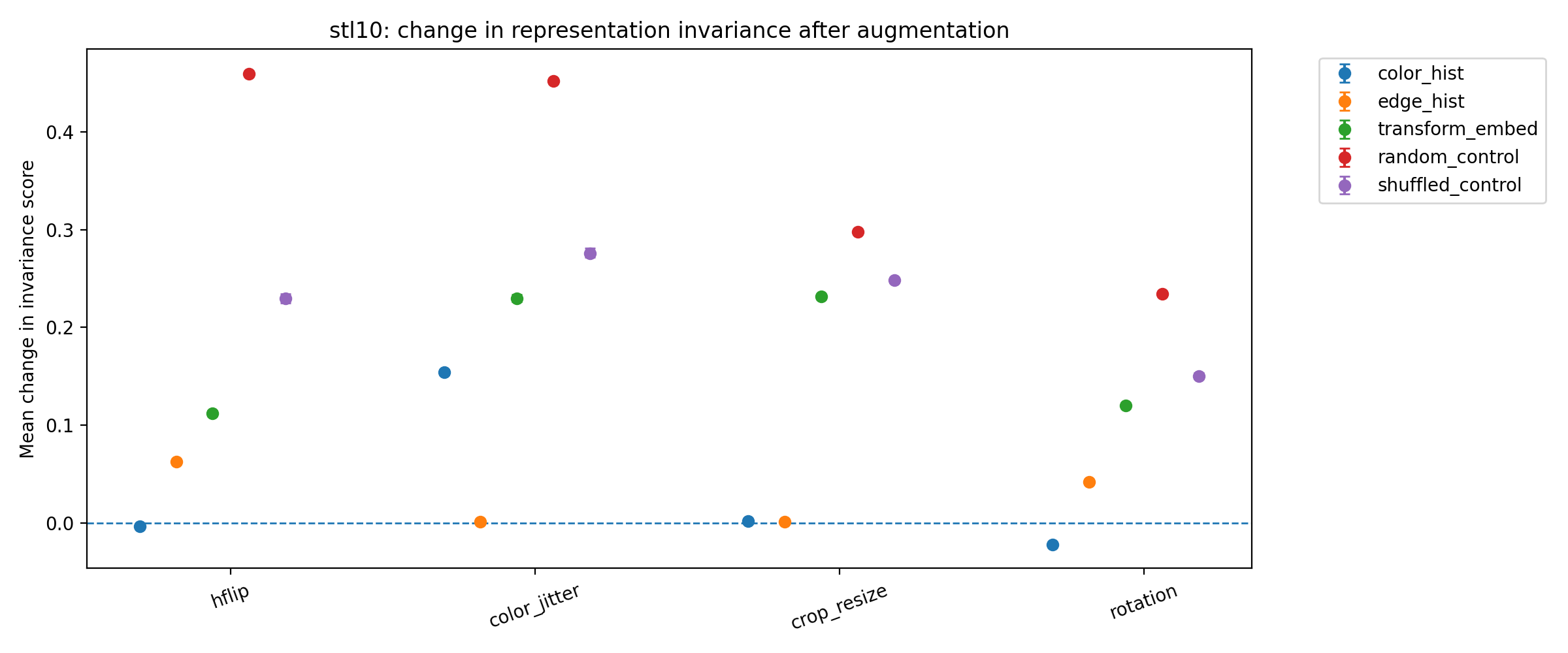}
    \caption*{STL-10}
\end{minipage}
\caption{Change in empirical invariance distance after representation augmentation. Positive values indicate increased transformation sensitivity. The augmented systems preserve the supervised predictor exactly, yet their representation-level transformation sensitivity changes.}
\label{fig:app-transform-invariance}
\end{figure}

\subsubsection{Compression proxies also change}
\label{app:transform-compression}

The same predictor-preserving intervention changes compression-related diagnostics. Table~\ref{tab:app-transform-rank} reports representation dimension and effective rank. The original representation has dimension $512$. Appending handcrafted or learned auxiliary statistics increases representation dimension and changes effective rank, while classifier outputs remain fixed.

\begin{table}[h]
\centering
\caption{Representation dimension and effective rank. Augmentation changes compression-related representation diagnostics while preserving the supervised predictor exactly.}
\label{tab:app-transform-rank}
\begin{tabular}{lccc}
\toprule
Dataset & Representation & Dimension & Effective rank \\
\midrule
CIFAR-10 & $h$ & $512$ & $15.90$ \\
CIFAR-10 & $h+$color hist. & $560$ & $21.39$ \\
CIFAR-10 & $h+$edge hist. & $528$ & $17.58$ \\
CIFAR-10 & $h+$transform emb. & $640$ & $20.11$ \\
CIFAR-10 & $h+$random & $640$ & $38.73$ \\
\midrule
STL-10 & $h$ & $512$ & $11.18$ \\
STL-10 & $h+$color hist. & $560$ & $18.30$ \\
STL-10 & $h+$edge hist. & $528$ & $12.48$ \\
STL-10 & $h+$transform emb. & $640$ & $15.93$ \\
STL-10 & $h+$random & $640$ & $46.01$ \\
\bottomrule
\end{tabular}
\end{table}

These results use effective rank as a diagnostic compression proxy. They illustrate a general point: compression-like representation diagnostics can change under a predictor-preserving transformation. Therefore, such diagnostics are not determined by the supervised predictor alone.

\subsubsection{Statistical stability}
\label{app:transform-statistical-stability}

The invariance changes are statistically stable. For all reported invariance comparisons, the bootstrap confidence intervals exclude zero. Sign-flip permutation tests yield $p=0.0005$ for the reported rows, using 2000 permutations. Subsampling-based empirical power estimates are close to one for most nontrivial effects, including the learned transformation embedding at small sample sizes.

For the learned transformation embedding, paired effect sizes are moderate to large. On CIFAR-10, paired Cohen's $d$ ranges from $0.51$ for horizontal flips to $1.18$ for crop/resize. On STL-10, it ranges from $0.92$ for horizontal flips to $2.30$ for crop/resize. Thus, the representation-level changes are not marginal numerical artifacts; they are stable effects under the diagnostic metrics.

The statistical analysis concerns the measured representation diagnostics, not the predictor-preservation claim. Predictor preservation follows exactly from the construction $c^q(u,v)=c(u)$ and is confirmed numerically by zero prediction disagreement and zero logit difference.

Thus transformation decodability, invariance distance, and effective rank change under an exactly fixed class predictor.

\subsection{Case Study III: Domain Information Under Fixed Class Prediction}
\label{app:case-study-domain}

The third diagnostic targets domain or nuisance information. It uses PACS and OfficeHome, two domain-annotated image-classification datasets. The supervised task is object classification, while the domain label is treated as an auxiliary nuisance variable. By appending the one-hot domain label to the representation and forcing the class head to ignore it, the experiment tests whether domain information can be made perfectly decodable while the supervised class predictor remains unchanged.

\subsubsection{Setup}
\label{app:domain-setup}

Both datasets contain object-class labels and domain labels. The supervised task is object classification. The domain label is treated as an auxiliary nuisance variable and admissible side information for the diagnostic.

Using ImageNet initialization, a ResNet--18 classifier is fitted separately on each dataset. The trained classifier is written as
\begin{equation}
f(x)=c(h(x))
\end{equation}
where $h(x)\in\mathbb R^{512}$ is the penultimate-layer representation and $c$ is the final linear classification head. Training uses the class labels; domain labels are withheld from the supervised head.

The trained representation $h$ and head $c$ are then frozen. Let $q(X)$ denote the one-hot encoded domain label. Since both PACS and OfficeHome have four domains,
\begin{equation}
q(x)\in\{0,1\}^4.
\end{equation}
Define
\begin{equation}
h^q(x)=(h(x),q(x))\in\mathbb R^{516},
\end{equation}
and
\begin{equation}
c^q(u,v)=c(u).
\end{equation}
Therefore,
\begin{equation}
c^q(h^q(x))=c(h(x))
\end{equation}
for every input $x$.

Three groups of quantities are evaluated. First, predictor preservation is verified by measuring class accuracy, class loss, and prediction disagreement between $c\circ h$ and $c^q\circ h^q$. Second, domain accessibility is measured by training a linear probe to predict the domain label from either $h(X)$ or $h^q(X)$. Third, representation dimension and effective rank are reported as coarse compression-related diagnostics.

All results are reported over three random seeds. For the domain-probe gains, paired bootstrap confidence intervals are computed. Empirical power for detecting the probe-accuracy gain is also estimated under subsampling. The statistical analysis concerns the representation-level diagnostics; predictor preservation follows exactly from the construction.

\subsubsection{Predictor preservation}
\label{app:domain-predictor-preservation}

Table~\ref{tab:app-domain-predictor-preservation} verifies that the augmented systems preserve the supervised class predictor exactly. Class accuracy, class loss, predicted labels, and logits are unchanged. The prediction disagreement between $c\circ h$ and $c^q\circ h^q$ is zero on both datasets.

\begin{table}[h]
\centering
\caption{Predictor preservation under domain-label augmentation. The augmented representation is $h^q(x)=(h(x),q(x))$, where $q(x)$ is the one-hot domain label, and the augmented head is $c^q(u,v)=c(u)$. Values are mean $\pm$ standard deviation over three seeds.}
\label{tab:app-domain-predictor-preservation}
\begin{tabular}{lccc}
\toprule
Dataset & Class accuracy & Class loss & Prediction disagreement \\
\midrule
OfficeHome & $0.756 \pm 0.032$ & $0.942 \pm 0.079$ & $0.000 \pm 0.000$ \\
PACS       & $0.922 \pm 0.017$ & $0.222 \pm 0.049$ & $0.000 \pm 0.000$ \\
\bottomrule
\end{tabular}
\end{table}

The augmented representation explicitly contains the domain label, but the supervised head ignores the appended coordinates. Hence the class predictor is unchanged pointwise.

\subsubsection{Domain information becomes perfectly decodable}
\label{app:domain-decoding}

Table~\ref{tab:app-domain-decoding} reports domain-probe accuracy from the original representation $h$ and from the augmented representation $h^q$.

\begin{table}[h]
\centering
\caption{Domain-probe accuracy under fixed class prediction. Appending the one-hot domain label leaves the supervised class predictor unchanged, but makes domain information perfectly linearly decodable from the representation. Values are mean $\pm$ standard deviation over three seeds.}
\label{tab:app-domain-decoding}
\begin{tabular}{lccc}
\toprule
Dataset & Domain probe from $h$ & Domain probe from $h^q$ & Gain \\
\midrule
OfficeHome & $0.667 \pm 0.009$ & $1.000 \pm 0.000$ & $0.333 \pm 0.009$ \\
PACS       & $0.949 \pm 0.002$ & $1.000 \pm 0.000$ & $0.051 \pm 0.002$ \\
\bottomrule
\end{tabular}
\end{table}

The effect is strongest on OfficeHome. The original representation already contains nontrivial domain information, with domain-probe accuracy $0.667\pm0.009$. After predictor-preserving augmentation, domain-probe accuracy becomes $1.000\pm0.000$, giving a gain of $0.333\pm0.009$. On PACS, the original representation already makes domain highly accessible, with probe accuracy $0.949\pm0.002$; appending the domain label increases domain decodability to $1.000\pm0.000$.

Bootstrap confidence intervals for the domain-probe gains exclude zero for every seed. On OfficeHome, the seed-level gains are approximately $0.323$, $0.338$, and $0.338$. On PACS, the corresponding gains are approximately $0.050$, $0.050$, and $0.054$. Monte Carlo power estimates are $1.0$ for the one-hot augmentation on both datasets.

This witness is intentionally transparent: since the one-hot domain label is appended directly, perfect domain decodability is expected. Since the domain coordinate is appended explicitly, perfect probe recovery is expected. The relevant observation is that this change in domain decodability occurs while the class predictor is fixed pointwise. Supervised class prediction cannot certify that domain information is absent from the representation. A representation that explicitly contains the domain label and a representation that omits it are observationally equivalent from the viewpoint of supervised prediction when the head ignores the added coordinates.

\subsubsection{Compression-related diagnostics}
\label{app:domain-compression}

The same intervention also changes compression-related representation diagnostics. The original penultimate representation has dimension $512$. Since the domain label is represented by a four-dimensional one-hot vector, the augmented representation has dimension
\begin{equation}
512+4=516.
\end{equation}
Thus, the dimensionality of the representation increases from $512$ to $516$ by construction.

Table~\ref{tab:app-domain-effective-rank} reports the effective rank of the original and augmented representations. The effective rank also increases slightly, reflecting the additional domain coordinates, although the primary diagnostic effect is the change in domain decodability.

\begin{table}[h]
\centering
\caption{Representation dimension and effective rank under domain-label augmentation. The dimensionality increases from $512$ to $516$ by construction. Effective rank increases slightly, while the supervised class predictor is unchanged. Values are mean $\pm$ standard deviation over three seeds.}
\label{tab:app-domain-effective-rank}
\begin{tabular}{lcccc}
\toprule
Dataset & Dim. $h$ & Dim. $h^q$ & Eff. rank $h$ & Eff. rank $h^q$ \\
\midrule
OfficeHome & $512$ & $516$ & $73.621 \pm 3.239$ & $74.012 \pm 3.268$ \\
PACS       & $512$ & $516$ & $13.812 \pm 2.729$ & $13.898 \pm 2.739$ \\
\bottomrule
\end{tabular}
\end{table}

Effective rank is used as a coarse diagnostic of representation geometry rather than as a complete measure of information-theoretic compression. The result illustrates the same point as the domain-probe analysis: representation-level quantities can change while the supervised predictor is held fixed exactly.

Thus domain accessibility changes under an exactly fixed class predictor.

\subsection{Empirical conclusion}
\label{app:empirical-conclusion}

The three diagnostics instantiate the same mathematical obstruction in different representation-level languages. CelebA shows that semantic attributes can be made accessible while the supervised target predictor is fixed. CIFAR-10 and STL-10 show that transformation decodability and transformation sensitivity can change while the class predictor is fixed. PACS and OfficeHome show that domain information can be made perfectly decodable while the class predictor is fixed.

In every case, the equality
\begin{equation}
c^q\circ h^q=c\circ h
\end{equation}
holds by construction. The empirical measurements therefore show that the reported representation diagnostics can vary inside a fixed predictor fiber while prediction is unchanged.

\section{Additional Experimental Details}
\label{app:additional-experimental-details}

\subsection{Additional Waterbirds details}
\label{app:waterbirds-additional}

The Waterbirds constrained-representation experiment trains ERM and five
families of constrained models over ten seeds. For each non-ERM method and
seed, the configuration whose validation supervised performance is closest to
the corresponding ERM model is selected using validation accuracy and
cross-entropy. The matching tolerances are
\[
\Delta_{\rm acc}=0.01,
\qquad
\Delta_{\rm CE}=0.05 .
\]
The selected matched set contains $60$ seed-method models: ERM plus five
constraint families across ten seeds.

\begin{table}[h]
\centering
\caption{Paired changes relative to ERM after supervised-performance matching.
Differences are computed within seed. Negative invariance distance means greater
transformation stability.}
\label{tab:app-waterbirds-paired}
\small
\begin{tabular}{lccc}
\toprule
Comparison & Nuisance probe diff. & Invariance dist. diff. & Effective-rank diff. \\
\midrule
Bottleneck $-$ ERM & $-0.0657$ & $-0.109$ & $-33.8$ \\
VIB $-$ ERM & $-0.0322$ & $+0.0568$ & $-33.9$ \\
AugInv $-$ ERM & $+0.0064$ & $-0.0609$ & $-14.8$ \\
SupCon $-$ ERM & $+0.0058$ & $-0.0267$ & $-17.0$ \\
Adv $-$ ERM & $-0.0037$ & $-0.119$ & $-32.3$ \\
\bottomrule
\end{tabular}
\end{table}

The adversarial variant changes geometry and reduces effective rank, but in
this run it does not substantially reduce linear background decodability: the
paired nuisance-probe balanced-accuracy change is $-0.37$ percentage
points. The main text therefore emphasizes the other constrained models. The
result is still informative: an objective intended to affect nuisance information can
alter other representation properties without reliably removing the nuisance
variable under the probe used here.

\begin{figure}[h]
\centering
\includegraphics[width=0.78\linewidth]{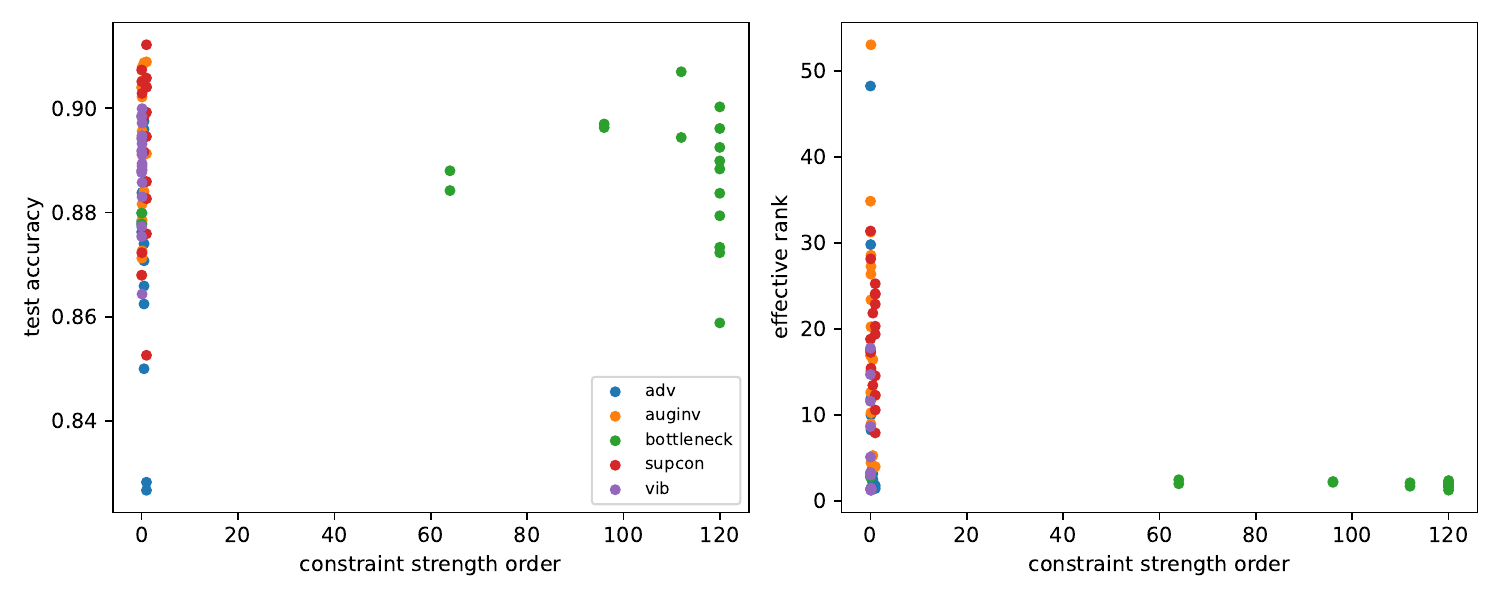}
\caption{Constraint paths on Waterbirds. Sweeping constraint strengths changes
task performance and representation diagnostics differently for different
objectives. The main text reports the supervised-performance-matched selections
from these paths.}
\label{fig:app-waterbirds-constraint-paths}
\end{figure}

\subsection{Predictor-preserving representation augmentation}
\label{app:repr-augmentation}

The main Waterbirds experiment uses naturally trained models. This appendix records
the exact algebraic witness that motivates the empirical design.

Let a supervised predictor factor as
\[
    f(x)=c(h(x)),
\]
where $h:\mathcal X\to\mathcal H$ is a representation and
$c:\mathcal H\to\mathcal A$ is the task head. For any measurable feature map
\[
    r:\mathcal X\to\mathcal R,
\]
define
\[
    \widetilde h(x)=(h(x),r(x)),
    \qquad
    \widetilde c(u,v)=c(u).
\]
Then, for every input $x$,
\[
    \widetilde c(\widetilde h(x))=c(h(x))=f(x).
\]
Thus every supervised quantity depending on the final prediction alone is
unchanged: pointwise predictions, task loss, task accuracy, calibration,
prediction disagreement, and predictive KL divergence with respect to another
fixed predictor are all identical.

Representation diagnostics, however, can change. If $r(x)$ contains a nuisance
variable, then a probe can recover nuisance information from $\widetilde h$
even if the task head ignores it. If $r(x)$ contributes high-variance or
high-dimensional coordinates, then the covariance spectrum, effective rank, and
CKA similarity of the representation can change. If $r(x)$ changes under a
transformation $g$, then the transformation distance between $\widetilde h(x)$
and $\widetilde h(gx)$ can increase, again without changing the supervised
predictor.

This construction is an exact finite-sample witness: supervised risk alone
cannot identify representation properties when the predictor is fixed
pointwise. The naturally trained Waterbirds experiment provides the
complementary observation that ordinary representation-level constraints select
different properties among models with comparable supervised behavior.

\subsection{Near-fiber thresholds and witnesses}
\label{app:near-fiber-details}

Table~\ref{tab:app-near-fiber-thresholds} records the exact thresholds used for
the same-seed near-fiber experiment.

\begin{table}[h]
\centering
\caption{Near-fiber thresholds and counts. Scalar selections are
median-centered. Pairwise selections use prediction disagreement and symmetric
KL.}
\label{tab:app-near-fiber-thresholds}
\small
\begin{tabular}{lcccccc}
\toprule
Dataset & $\varepsilon_A$ & $\varepsilon_L$ & Selected & $D_{\rm pred}$ & $D_{\rm KL}$ & Pair count \\
\midrule
Colored-MNIST & $0.005$ & $0.005531$ & $25/50$ & $0.02$ & $0.10$ & $132$ all-model; $8$ selected \\
CelebA & $0.005$ & $0.009970$ & $13/25$ & $0.05$ & $0.09$ & $96$ all-model; $21$ selected \\
\bottomrule
\end{tabular}
\end{table}

\begin{table}[h]
\centering
\caption{Representative near-prediction-equivalent witnesses. The predictors
are close but the representations are not identical under CKA, Procrustes, probe
gaps, and holonomy gaps.}
\label{tab:app-near-fiber-witnesses}
\scriptsize
\begin{tabular}{llcccccc}
\toprule
Dataset & Pair & $D_{\rm pred}$ & $D_{\rm KL}$ & CKA & Procrustes & Probe gap & Holonomy gap \\
\midrule
Colored-MNIST & \texttt{005}/\texttt{031} & $0.0184$ & $0.07455$ & $0.8767$ & $0.4734$ & $0.0081$ & $0.02235$ \\
Colored-MNIST & \texttt{005}/\texttt{026} & $0.0191$ & $0.08474$ & $0.8869$ & $0.4690$ & $0.0015$ & $0.01819$ \\
Colored-MNIST & \texttt{028}/\texttt{031} & $0.0186$ & $0.08431$ & $0.8624$ & $0.4897$ & $0.0091$ & $0.01391$ \\
CelebA & \texttt{005}/\texttt{022} & $0.0487$ & $0.08510$ & $0.9101$ & $0.3955$ & $0.0258$ & $0.00438$ \\
CelebA & \texttt{015}/\texttt{022} & $0.0468$ & $0.08561$ & $0.9072$ & $0.3929$ & $0.0253$ & $0.00434$ \\
CelebA & \texttt{000}/\texttt{004} & $0.0465$ & $0.07592$ & $0.9229$ & $0.3603$ & $0.0108$ & $0.00257$ \\
\bottomrule
\end{tabular}
\end{table}

\begin{figure}[h]
\centering
\begin{minipage}{0.49\linewidth}
\centering
\includegraphics[width=\linewidth]{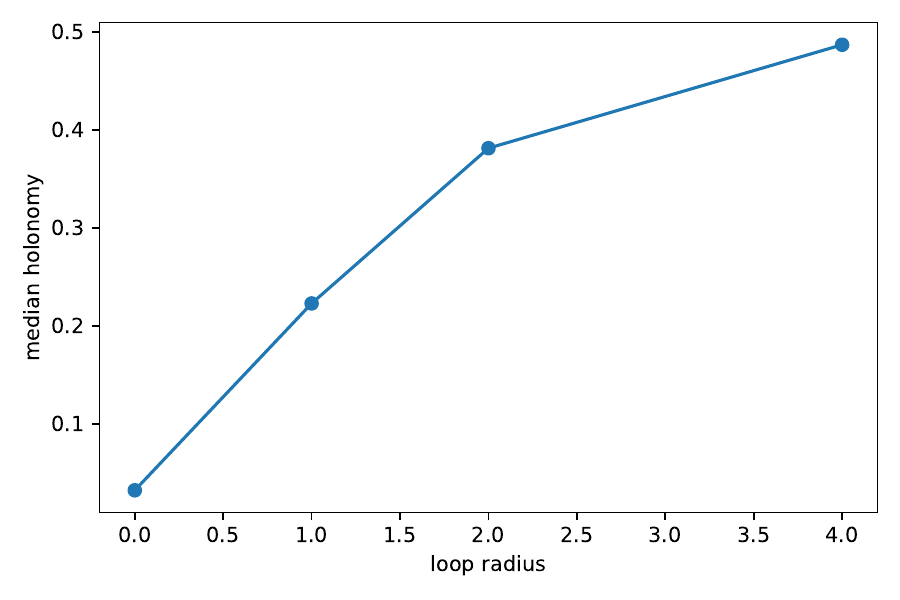}\\
{\small (a) Colored-MNIST}
\end{minipage}
\hfill
\begin{minipage}{0.49\linewidth}
\centering
\includegraphics[width=\linewidth]{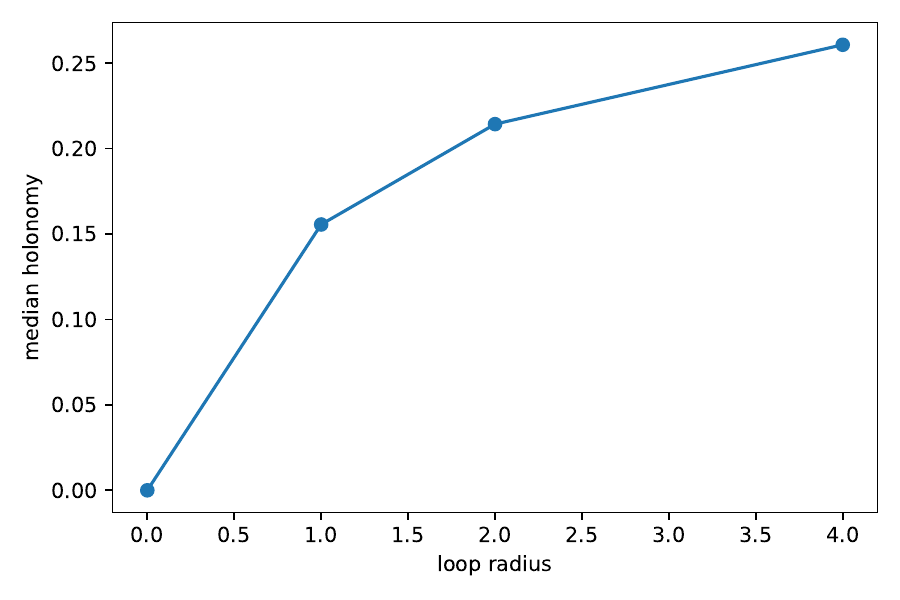}\\
{\small (b) CelebA}
\end{minipage}
\caption{Holonomy versus loop radius. The pathwise diagnostic is supporting
evidence. It is strongest on
Colored-MNIST and weaker on CelebA.}
\label{fig:app-holonomy-radius}
\end{figure}

The near-fiber experiment has a narrower interpretation than the exact
predictor-preserving witnesses. These selections are approximate predictor
fibers. For the Colored-MNIST holonomy diagnostic\citep{sevetlidis2026gaugeinvariant}, several individual
zero-radius rows are above numerical floor even though the aggregate numerical
check passes at the median scale. The main text therefore relies primarily on
supervised matching, prediction distance, CKA, Procrustes, RSA, effective-rank,
and probe diagnostics, while using holonomy as secondary pathwise evidence.

\end{document}